\def\eqref#1{equation~\ref{#1}}
\def\1{\bm{1}}
\DeclareMathAlphabet{\mathsfit}{\encodingdefault}{\sfdefault}{m}{sl}
\SetMathAlphabet{\mathsfit}{bold}{\encodingdefault}{\sfdefault}{bx}{n}
\definecolor{codegreen}{rgb}{0,0.6,0}
\definecolor{codegray}{rgb}{0.5,0.5,0.5}
\definecolor{codepurple}{rgb}{0.58,0,0.82}
\definecolor{backcolour}{rgb}{0.95,0.95,0.92}
\newcommand{\cmark}{\ding{51}}%
\definecolor{lightgray}{gray}{0.75}
\newcommand{\xmark}{\textcolor{lightgray}{\ding{55}}}
\newcolumntype{L}[1]{>{\raggedright\arraybackslash}p{#1}}
\newcommand{\lightbulbicon}{%
  \begin{tikzpicture}[baseline=-0.5ex]
    \draw[fill=white, draw=insightteal, thick] (0,0) circle (1.5ex);
    \node[scale=0.8, color=insightteal] at (0,0) {\faLightbulbO~};
  \end{tikzpicture}%
}
\definecolor{insightteal}{RGB}{34, 139, 139}   
\definecolor{insightback}{RGB}{240, 248, 248}   
\newtcolorbox{customblockquote}{
  colframe=insightteal,
  colback=insightback,
  boxrule=0pt,
  left=5pt,  
  right=4pt,
  top=5pt,
  bottom=3pt,
  arc=0pt,
  breakable,
  before skip=1.2\baselineskip,
  after skip=0.7\baselineskip,
  left skip=0pt,
  right skip=0pt,
  enhanced jigsaw,
  frame hidden,
   overlay={
    \draw[insightteal, line width=2pt] 
      ([yshift=1pt]frame.north west) -- (frame.south west);
    \node[inner sep=0pt] at ([xshift=0pt, yshift=-1.3pt]frame.north west) {\lightbulbicon};
  },
  fontupper=\fontfamily{lmr}\selectfont,
  boxsep=1pt,
}
\definecolor{verylightgray}{gray}{0.95}
\newtcolorbox{greycustomblock}{
  colframe=greydark,        
  colback=greylight,        
  boxrule=1pt,              
  left=2.5pt,               
  right=3pt,                
  top=5pt,                  
  bottom=3pt,               
  arc=0pt,                  
  breakable,                
  before skip=0.2\baselineskip, 
  after skip=0.2\baselineskip,  
  left skip=0pt,            
  right skip=0pt,           
  enhanced jigsaw,          
  frame hidden,             
  overlay={                 
    \draw[greydark, line width=2pt]
      ([yshift=-1pt]frame.north west) -- ([yshift=1pt]frame.south west); 
  },
  fontupper=\selectfont, 
}
\definecolor{corporateblue}{RGB}{60, 110, 100}
\definecolor{corporatelightblue}{RGB}{240, 247, 245}
\newtcolorbox{bluecustombox}{
  colframe=corporateblue,        
  colback=corporatelightblue,        
  boxrule=1pt,              
  left=2.5pt,               
  right=3pt,                
  top=5pt,                  
  bottom=3pt,               
  arc=0pt,                  
  breakable,                
  before skip=0.2\baselineskip, 
  after skip=0.2\baselineskip,  
  left skip=0pt,            
  right skip=0pt,           
  enhanced jigsaw,          
  frame hidden,             
  overlay={                 
    \draw[corporateblue, line width=2pt]
      ([yshift=-1pt]frame.north west) -- ([yshift=1pt]frame.south west); 
  },
  fontupper=\selectfont, 
}
\definecolor{lighteconomist}{RGB}{252, 233, 237} 
\definecolor{economist}{RGB}{115,00,00} 
\definecolor{customgreen}{RGB}{116, 154, 114}
\definecolor{lightgreen}{RGB}{240, 246, 232}
\definecolor{ForestGreen}{RGB}{34, 139, 34}
\definecolor{pastelblue}{RGB}{70, 70, 70} 
\definecolor{pastelorange}{RGB}{201, 171, 102} 
\definecolor{pastelgreen}{RGB}{76, 124, 49} 
\definecolor{greydark}{RGB}{90, 90, 90} 
\definecolor{greylight}{RGB}{245, 245, 245}
\newtcolorbox{shadebox}[1][]{%
    colback=gray!4,
    colframe=white, 
    boxsep=5pt,
    arc=0mm,
    top=0.5pt,
    bottom=0.5pt,
    breakable,
    left=0.5pt,
    right=0.5pt,
    auto outer arc,
    #1 
}
\newtheorem{assumption}{Assumption}
\newtheorem{proposition}{Proposition}
\newtheorem{lemma}{Lemma}
\newcommand{\norm}[1]{\left\lVert#1\right\rVert}
\renewcommand{\vec}[1]{\mathbf{#1}}
\newcolumntype{C}[1]{>{\centering\arraybackslash}m{#1}}
\title{Deep Hierarchical Learning with Nested Subspace Networks for Large Language Models}
\author{Paulius Rauba\\
University of Cambridge \\
\And
Mihaela van der Schaar \\
University of Cambridge
}
\begin{document}

\maketitle

\begin{abstract}
Large neural networks are typically trained for a fixed computational budget, creating a rigid trade-off between performance and efficiency that is ill-suited for deployment in resource-constrained or dynamic environments. Existing approaches to this problem present a difficult choice: training a discrete collection of specialist models is computationally prohibitive, while dynamic methods like slimmable networks often lack the flexibility to be applied to large, pre-trained foundation models. In this work, we propose\textit{ Nested Subspace Networks (NSNs),} a novel architectural paradigm that enables a single model to be dynamically and granularly adjusted across a continuous spectrum of compute budgets at inference time. The core of our approach is to re-parameterize linear layers to satisfy a nested subspace property, such that the function computed at a given rank is a strict subspace of the function at any higher rank. We show that this entire hierarchy of models can be optimized jointly via an uncertainty-aware objective that learns to balance the contributions of different ranks based on their intrinsic difficulty. We demonstrate empirically that NSNs can be surgically applied to pre-trained LLMs and unlock a smooth and predictable compute-performance frontier. For example, a single NSN-adapted model can achieve a 50\% reduction in inference FLOPs with only a 5 percentage point loss in accuracy. Our findings establish NSNs as a powerful framework for creating the next generation of adaptive foundation models.
\end{abstract}

\section{Introduction}
\label{sec:introduction}

{\textbf{Motivation}. When we deploy deep learning-based systems in practice, there is a trade-off between two properties: how good the model is (\textit{performance}) and how expensive it is to run (\textit{compute}). Typically, the \textit{larger} the model, the \textit{better} the performance. When using such models at inference (deployment) time, we may want to choose, on-the-fly, how ``expensive`` vs ``fast`` a model should be. For instance, we may prefer (i) cheaper models for easier questions in language models; (ii) lower-compute models on phones when battery levels drop; or (iii) more expensive models for safety-critical requests such as medical diagnosis. In this paper, we consider exactly this problem---how to build a single network that can flexibly trade off performance and inference cost at test time.

\textbf{Current approaches}. Most popular approaches fall into two main categories. On the one hand, conventional approaches operate by creating smaller, static artifacts from a larger pre-trained model \citep{cheng2017survey}, using techniques like network pruning \citep{han2015learning, blalock2020state} or knowledge distillation \citep{gou2021knowledge}. More recently, parameter-efficient fine-tuning methods like Low-Rank Adaptation (LoRA) \citep{hu2022lora} have gained popularity for adapting large models, but these also produce a static, low-rank adaptation for a fixed budget. In theory, this approach yields highly optimized models for a specific computational target. In practice, however, this strategy suffers from its static nature; creating a model for a new budget requires repeating the entire, often costly, compression pipeline \citep{zhu2017prune}, and it fails to provide the granular, on-the-fly adaptability needed for dynamic environments.

On the other hand, recent methods using dynamic neural networks \citep{han2021dynamic} operate by designing architectures that can be adjusted at inference time, such as slimmable networks that can drop channels \citep{yu2018slimmable, li2021dynamic} or layers \citep{wu2018blockdrop}. In theory, these approaches more readily take advantage of a single set of weights to serve multiple budgets. In practice, however, this strategy often comes at the price of much more challenging, specialized training schemes that are typically applied from scratch \citep{cai2019once}. This makes them difficult to apply to the vast ecosystem of existing, pre-trained foundation models, which represent the vast majority of trained and used models today. Further, many of these techniques—with some notable exceptions \citep{yu2019universally}—offer only a coarse, discrete set of operating points rather than a smooth, continuous trade-off \citep{teerapittayanon2016branchynet, yu2018slimmable}.

\paragraph{Three Desiderata} Can we develop a better approach? Building on the discussion above, we contend that a good solution to the dynamic inference problem should satisfy the following three desiderata. 
\vspace{-2mm}

\quad \textbf{\textcolor{pastelblue}{{\textbullet}  \textit{D1: Instant Adaptability}}}. Instantly trade-off compute and performance at test-time without any additional overhead or expensive fine-tuning procedures in a single neural architecture. 
\vspace{-2mm}

\quad \textbf{\textcolor{pastelblue}{{\textbullet} \textit{D2: Post-Hoc Applicability}}}. Have the architectural generality to be applied to any pre-trained foundation model and be widely applicable for many classes of models.
\vspace{-2mm}

\quad \textbf{\textcolor{pastelblue}{{\textbullet} \textit{D3: Granularity}}}. It should provide a smooth, continuous spectrum of operating points along the compute-performance Pareto frontier, not just a few discrete, pre-determined choices.

In this work, we present an effective method that satisfies these criteria and introduces a new paradigm of flexible model deployment. Our contributions are three-fold.

\begin{customblockquote}
\textbf{Contributions.} \textbf{First}, we introduce Nested Subspace Networks (NSNs), a novel architecture that represents a continuous hierarchy of models within a single set of weights, and we propose a practical uncertainty-aware training objective that makes this hierarchy learnable (Sec. \ref{sec:nestednetworks}). \textbf{Second}, we provide theoretical guarantees for granular budget control, showing that our method induces a smooth and predictable performance-compute frontier, even for budgets not explicitly seen during training (Sec. \ref{sec:performance_interpolated}). \textbf{Third}, we demonstrate the broad utility and effectiveness of NSNs through comprehensive experiments (Sec. \ref{sec:exps}), including the surgical adaptation of large pre-trained language models, and show that a single adaptive network can match the performance of multiple specialist models.
\end{customblockquote}

\begin{figure}
    \centering
\includegraphics[width=1\linewidth]{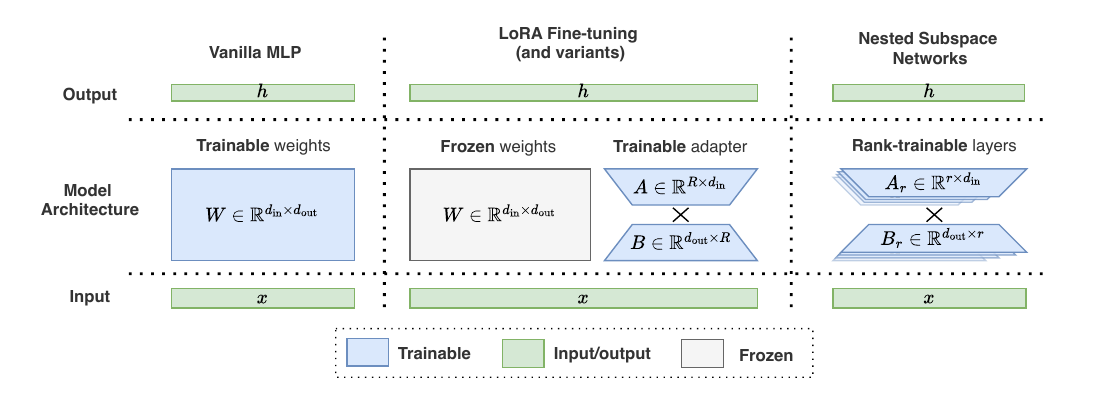}
    \vspace{-7mm}
    \caption{\textbf{Illustration of \textit{Nested Subspace Networks}}. NSNs convert linear layers into rank-trainable layers which enable dynamic control over the computational cost (FLOPs) of a forward pass. \textbf{Left}: Standard MLP layers that are composed of trainable weights. \textbf{Middle}: LoRA fine-tuning which have frozen weights and trainable adapters. {\textbf{Right}: Nested Subspace Networks replace each linear layer with a \emph{single} pair of shared factor matrices $(A,B)$ defining a rank-trainable layer. The effective weight at rank $r$, $W_r = B_r A_r$, is obtained by using only the first $r$ rows of $A$ and first $r$ columns of $B$. Different operating points (different ranks) therefore correspond to using different prefixes of the same $(A,B)$ This allows for the construction of a compute-performance Pareto frontier at inference.}}
    \label{fig:fig1}
\end{figure}

\section{Nested Subspace Networks}
\label{sec:nestednetworks}

\textbf{Preliminaries}. Consider a standard feed-forward neural network. A standard linear layer computes the affine transformation $f(\vec{x}) = W\vec{x} + \textbf{b}$, where $\vec{x} \in \mathbb{R}^{d_\text{in}}$ is the input vector, the weight matrix is $W \in \mathbb{R}^{d_\text{out} \times d_\text{in}}$, and the bias $\textbf{b} \in \mathbb{R}^{d_\text{out}}$. The number of parameters in $W$ scales with the product of the input and output dimensions which becomes a large computational and memory bottleneck. This motivates the need for efficient parametrizations.

\textbf{Low-rank factorization} has become a popular approach to mitigating the quadratic cost \citep{hu2022lora}, where the full-rank matrix $W$ is approximated with two smaller matrices $W = BA$, where $A \in \mathbb{R}^{R \times d_\text{in}}$ and $B \in \mathbb{R}^{d_\text{out} \times R}$. Here,  $R \ll \min(d_\text{in}, d_\text{out})$ is a maximum rank.  The transformation becomes $f(\vec{x}) = (BA)\vec{x} + \vec{b} = B(A\vec{x}) + \vec{b}$. 

\subsection{The Nested Subspace Architecture}
\label{sec:nestedsubspace}

NSNs are a class of neural network architectures designed for parameter efficiency and dynamic, post-training adjustment of model capacity. The core principle is to re-parameterize a linear layer with a sequence of low-rank approximations $\{W_r\}_{r=1}^R$ that form a \textit{nested hierarchy}, such that the image of each approximation is a subspace of the next. The architecture is built on the principle of low-rank factorization which we extend to overcome its static limitations and make it applicable to a wide variety of network architectures. Concretely, unlike slimmable networks \citep{yu2018slimmable}, which vary channel width and therefore change intermediate tensor shapes, NSNs only vary the rank of a shared low-rank factorization, so all input–output dimensions of each layer remain fixed and the architecture can be inserted into pre-trained transformers and LLMs without modifying their interfaces or normalization layers.

\textbf{Reducing FLOPs}. The low-rank factorization reduces the model's active parameter count and, consequently, its required floating-point operations (FLOPs). This yields a reduction when $r$ is below the break-even point: $2r(d_\text{in} + d_\text{out}) < 2 d_\text{in} d_\text{out}$, which defines the break-even rank as $\frac{d_\text{in} d_\text{out}}{d_\text{in} + d_\text{out}}$.

\begin{greycustomblock}
\paragraph{Definition 1 (Nested Subspace Network).}
A Nested Subspace Network (NSN) is a neural network architecture that incorporates one or more \textbf{NSN layers}. An NSN layer is a linear transformation parameterized by a pair of factor matrices, $A \in \mathbb{R}^{R \times d_\text{in}}$ and $B \in \mathbb{R}^{d_\text{out} \times R}$, where $R$ is a fixed maximum rank. For a rank $r \in \{1, \dots, R\}$, the effective weight matrix $W_r \in \mathbb{R}^{d_\text{out} \times d_\text{in}}$ is constructed from the submatrices $A_r$ (the first $r$ rows of $A$) and $B_r$ (the first $r$ columns of $B$). This is expressed as a sum of the rank-1 outer products, i.e.
$W_r := B_r A_r = \sum_{i=1}^{r} \vec{b}_i \vec{a}_i$
where $\vec{a}_i \in \mathbb{R}^{1 \times d_\text{in}}$ is the $i$-th \emph{row} of $A$ and $\vec{b}_i \in \mathbb{R}^{d_\text{out} \times 1}$ is the $i$-th \emph{column} of $B$.
\end{greycustomblock}

Note that there is a single pair of factor matrices $(A,B)$ for an NSN layer, and changing the rank $r$ only changes how many of their rows/columns are used to form $W_r$, not the underlying
parameters. Appendix \ref{subsec:toy_example} provides an intuitive, worked-out example with a simple matrix. NSNs are a flexible class of models that can operate on model architecture as long as it comprises linear layers. Therefore, it is applicable to models of different sizes, architectures, purposes, with varying inductive biases, etc. A central feature of NSNs is that it naturally gives rise to a fundamental property that we exploit in this work: the \textit{nested subspace property}.

\begin{greycustomblock}
\paragraph{Definition 2 (Nested Subspace Property).}
\label{def:2}
The family of weight matrices $\{W_r\}_{r=1}^R$ generated by an NSN layer satisfies the \textbf{nested subspace property} if the image of the rank-$r$ transformation is a subspace of the image of the rank-$(r+1)$ transformation for all $1 \le r < R$:
$$ \text{Im}(W_r) \subseteq \text{Im}(W_{r+1}) \quad \forall r \in \{1, \dots, R-1\} $$
A sufficient condition for this is that $A_{r+1}$ has full row rank for each $r$ (Sec. \ref{subsec:nested_property}). This implies the existence of a filtration of vector spaces: $\text{Im}(W_1) \subseteq \text{Im}(W_2) \subseteq \dots \subseteq \text{Im}(W_R)$.
\end{greycustomblock}

What does this property mean in practice? NSN layers parametrize \textit{an entire hierarchy of models}. In this hierarchy, due to the nested subspace property, the function class realized by a rank-$r$ model is a strict subset of the function class of a rank-($r+1$) model. Therefore, with the right training scheme, we can choose ``which network'' we want to employ by deciding on the rank. 

However, this approach raises an immediate issue: \textit{how can we train a model that learns a hierarchy of nested sub-models?} A naive approach would be to parametrize the model via low-rank factorization, select a highest rank, train it on the highest rank, and at test time truncate it to the desired rank $r$. While this technically satisfies the nested subspace property, there is no inductive bias to make the models operate along the compute-efficiency Pareto frontier (see Fig \ref{fig:native_rank_training_main}, implementation details can be found in Appendix \ref{exp:truncation}.) An alternative approach could be to train \textit{simultaneously} at different ranks and sum cross entropies within each rank. However, such a naive approach suffers from at least three main issues: (i) it does not take into account the intrinsic difficulty of learning a lower rank model (harder) \textit{versus} a higher rank model (easier); (ii) It results in training instability resulting from large losses in low-rank models; (iii) It is computationally prohibitive to train at all possible ranks. In the next section, we propose an uncertainty-aware training procedure that resolves these challenges.

\begin{customblockquote}
\textbf{Takeaway.} NSNs create a hierarchy of models within a single network using the \textit{nested subspace property}. The key challenge is training one set of weights to be optimal across this entire hierarchy simultaneously.
\end{customblockquote}

\begin{wrapfigure}[20]{r}{0.5\columnwidth}
    \vspace*{-4.5mm}
    \centering
    \includegraphics[width=\linewidth]{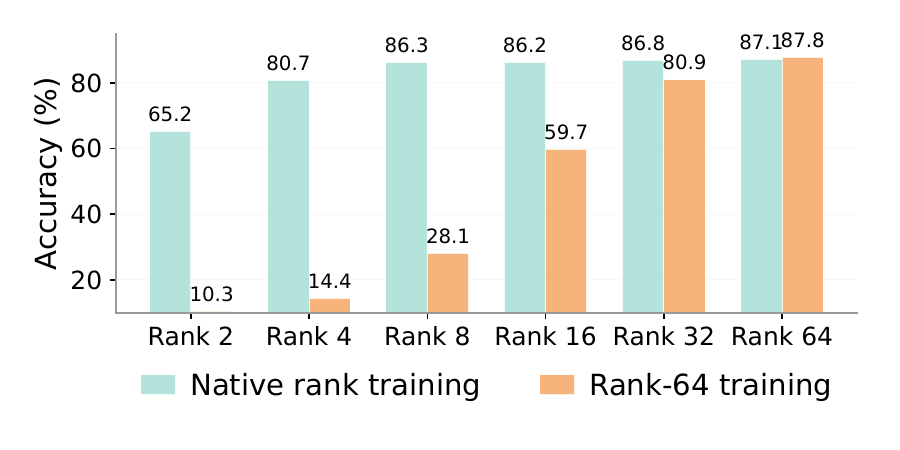}
    \caption{\textbf{Comparison of native-rank training and rank truncation for an MLP on CIFAR-10.} The plot compares the accuracy of individually training a model for each specific rank (\textit{Native rank training}) versus training a single model at a high rank (64) and truncating it to lower ranks at test time (\textit{Rank-64 training}). The significant performance gap demonstrates that naively truncating a high-rank model results in poor performance.}
\label{fig:native_rank_training_main}
    \rule{\linewidth}{.5pt}
\end{wrapfigure}

\subsection{Training with Multi-Rank Uncertainty}
\label{sec:multi-rank-uncertainty}

Our central goal is to learn a single parametrization that simultaneously yields optimal performance for all sub-models. We posit that the failure of naive approaches stems from the differences in intrinsic difficulty of learning different ranks (Sec.~\ref{sec:nestedsubspace}). Therefore, we treat the optimization of each sub-model at different ranks as a multi-task learning problem with varying difficulty levels.

\textbf{What properties should this optimization objective satisfy?} We seek a weighting mechanism that (i) automatically adapts the relative importance of ranks without per-rank hyperparameter tuning, (ii) is invariant to arbitrary rescalings of the factorization $W = BA$, (iii) guarantees positive weights, and (iv) is cheap enough to apply inside every NSN layer and on every training step. One way to reframe the ``difficulty'' of a problem is by quantifying the \textit{aleatoric uncertainty} of each task and weighting each task in proportion to that uncertainty \citep{kendall2018multi}. We model the aleatoric uncertainty by introducing learnable variance parameters $\sigma_k^2$ for each rank $k$. This variance is assumed to be \textit{heteroskedastic across ranks} (i.e., $\sigma_k^2 \neq \sigma_j^2$ for $k \neq j$) but \textit{homoskedastic within a rank} (i.e., constant for all inputs).

\textbf{\textcolor{pastelblue}{Modeling assumption}}. 
We consider the classification case (the regression case is analogous). Following \citet{kendall2018multi}, we use the standard uncertainty-weighted \emph{surrogate} objective in which each rank's cross-entropy is weighted by a learnable scale and regularized by a corresponding log-term. Concretely, for rank $k$ we use the per-rank contribution:
\begin{equation}
    \frac{1}{2\sigma_k^2}\,\mathcal{L}_{\text{CE}}(k) + \log\sigma_k,
\end{equation}
which serves purely as a weighting-and-regularization surrogate for balancing tasks in classification (i.e., it is not interpreted as a probabilistic likelihood over $\mathcal{L}_{\text{CE}}$).

\textbf{\textcolor{pastelblue}{Formulating the uncertainty-weighted training objective}}.  
During training, we sample an \textit{anchor rank} $\tilde{R} \leq R$---the maximum rank used at training time---and a \textit{variant rank} $r < \tilde{R}$. Assuming independent uncertainty parameters across ranks, the total objective for a training step is the sum of the two surrogate terms:
\begin{equation}
    \Bigg(\frac{1}{2\sigma_{\tilde{R}}^2}\,\mathcal{L}_{\text{CE}}(\tilde{R}) + \log\sigma_{\tilde{R}}\Bigg)
    + \Bigg(\frac{1}{2\sigma_r^2}\,\mathcal{L}_{\text{CE}}(r) + \log\sigma_r\Bigg).
\end{equation}

We reparameterize the variance by learning its logarithm $s_k = \log(\sigma_k^2)$ and drop the constant factor $\tfrac{1}{2}$ \citep{kendall2018multi}. This results in our final training objective, a function of the shared weights $A$ and $B$ (which define the model weights) and the learnable log-variances:
\begin{equation}
\label{eq:eq_loss}
    \mathcal{L}_{\text{total}}(A, B, s_{\tilde{R}}, s_r) =
    \big(\exp(-s_{\tilde{R}})\,\mathcal{L}_{\text{CE}}(\tilde{R}) + s_{\tilde{R}}\big)
    + \big(\exp(-s_r)\,\mathcal{L}_{\text{CE}}(r) + s_r\big).
\end{equation}

\begin{algorithm}[ht]
\caption{Multi-rank uncertainty-weighted training for NSNs (anchor at maximal rank)}
\label{alg:multi-rank-train}
\begin{algorithmic}[1]
\Require Dataset $\mathcal{D}$, maximal rank $R$ (anchor), trainable ranks $\mathcal{K}\subseteq\{1,\dots,R\}$, NSN $f_\theta(X;r)$, log-variances $\{s_k = \log\sigma_k^2\}_{k\in\mathcal{K}}$, cross-entropy loss $\mathrm{CE}$, optimizer Opt
\State Initialize $\theta$ and $s_k \gets 0$ for all $k \in \mathcal{K}$
\For{each minibatch $(X,Y)\sim\mathcal{D}$}
    \State Sample variant rank $r \sim \mathrm{Uniform}(\mathcal{K}\setminus\{R\})$
    \State Compute anchor logits $Z_R \gets f_\theta(X;R)$, anchor loss $L_R \gets \mathrm{CE}(Z_R,Y)$
    \State Compute variant logits $Z_r \gets f_\theta(X;r)$, variant loss $L_r \gets \mathrm{CE}(Z_r,Y)$
    \State $\mathcal{L}_{\text{anchor}} \gets \exp(-s_R)L_R + s_R$, \quad $\mathcal{L}_{\text{variant}} \gets \exp(-s_r)L_r + s_r$, \quad $\mathcal{L}_{\text{total}} \gets \mathcal{L}_{\text{anchor}} + \mathcal{L}_{\text{variant}}$
    \State $\mathrm{Opt.zero\_grad}()$; backpropagate $\mathcal{L}_{\text{total}}$ w.r.t. $\theta$ and $\{s_k\}$; $\mathrm{Opt.step}()$
\EndFor
\end{algorithmic}
\end{algorithm}
\textbf{\textcolor{pastelblue}{Why are the exponentials useful in this equation?}} It's useful to reason about this from three different perspectives. First, the exponentials are useful because the reparameterization \(w_k = e^{-s_k}\) ensures strictly positive weights and produces a strictly convex objective in \(s_k\). This is useful, since it provides stable gradient updates. Second, this formulation yields a closed-form optimum \(w_k^\star = 1/L_k\). This is useful because we (i) become scale invariant and (ii) have gradient balancing across ranks of different difficulty. Third, this is directly tied to building surrogates that are based on Gaussian regression likelihood for classification settings which are easy to optimize \citep{kendall2018multi}. More details in Appendix \ref{subsec:functional_form}.

\textbf{\textcolor{pastelblue}{Insights on the formulated objective}}. The uncertainty-weighted surrogate implicitly performs gradient balancing across ranks, since the effective contribution of each term scales with $\exp(-s_k)$. This connects our objective to established approaches for multi-task optimization that seek Pareto-stationary solutions by equilibrating task gradients \citep{sener2018multiobjective}, as well as to adaptive weighting schemes such as GradNorm \citep{chen2018gradnorm} and Dynamic Weight Averaging \citep{liu2019end}. This satisfies the required optimization properties (see Appendix \ref{subsec:why_uncertainty} for more discussion) and promotes the well-behaved performance frontier we analyze in Sec. \ref{sec:evaluating_int_ranks}. The full algorithm is presented in Algorithm \ref{alg:multi-rank-train}.

In practice, the inclusion of an anchor rank significantly stabilizes training and helps to learn a better final higher-rank model. Furthermore, we find that introducing a curriculum-learning-based sampling strategy for the variant ranks substantially improves downstream results relative to uniform sampling. Algorithm \ref{alg:nsn-training} summarizes the full training procedure. Each iteration evaluates the model at the maximal anchor rank and at a sampled variant rank, combines their losses through rank-specific uncertainty weights, and updates both the shared parameters and the log-variances. This mechanism jointly optimizes all submodels and stabilizes learning across heterogeneous ranks.

\begin{customblockquote}
\textbf{Takeaway.} We formalize the joint optimization of different ranks by learning rank-specific homoskedastic variance via a standard uncertainty-weighted surrogate.
\end{customblockquote}

\subsection{Interpreting log variances as a proxy for rank expressiveness}

\begin{wrapfigure}[12]{r}{0.40\columnwidth}
    \vspace*{-4.5mm}
    \centering
    \includegraphics[width=\linewidth]{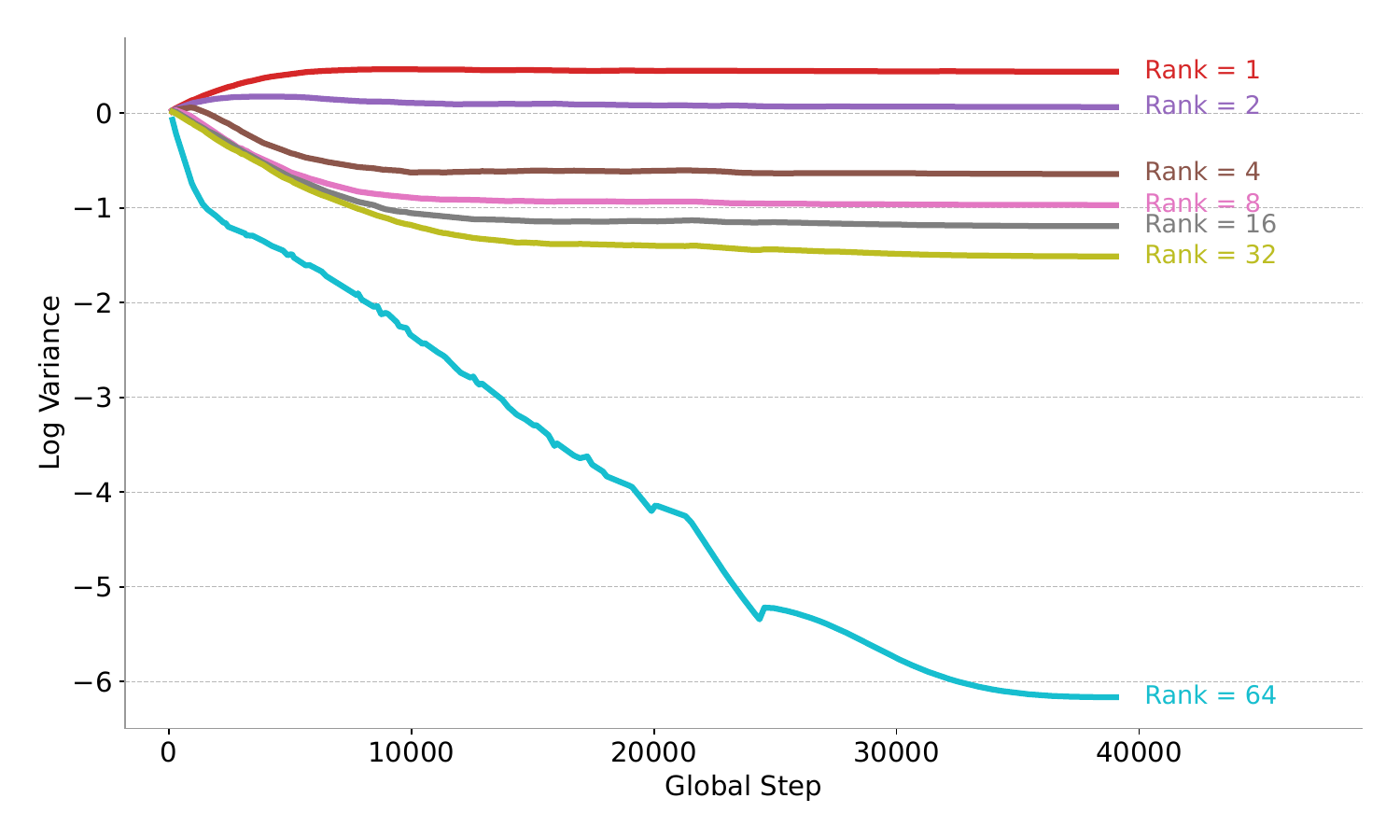}
    \caption{\textbf{Learned log-variances during training with the multi-rank uncertainty objective} on CIFAR-10 dataset. We train a single model with an anchor and variant ranks and find that higher ranks have lower task-dependent uncertainty during training.}
    \label{fig:task_uncertainty}
    \vspace{-2mm}
    \rule{\linewidth}{.5pt}
\end{wrapfigure}
\vspace{-5pt}

\textit{What has this formulation achieved?} Now, the learned log variances \textit{modulate} the influence of each rank's loss contribution to the objective. High log-variance promotes gradient attenuation by scaling the cross-entropy loss from high-variance ranks. Low log-variance amplifies the penalty loss for cross-entropy losses by increasing its magnitude. In fact, we can directly quantify the gradient contributions based on the loss as: 

\begin{align*}
\nabla_\theta \mathcal{L}_{\text{total}} &= \underbrace{\exp(-s_{\tilde{R}})\nabla_\theta \mathcal{L}_{CE}(\tilde{R})}_{\text{Anchor Contribution}} \\
&\quad + \underbrace{\exp(-s_r)\nabla_\theta \mathcal{L}_{CE}(r)}_{\text{Variant Contribution}}
\end{align*}

This mechanism allows the learned log-variances to serve as an emergent proxy for the \textit{effective expressiveness} of each rank-specific model. It follows that higher ranks, which possess greater representational capacity, should learn lower corresponding variances. As we demonstrate empirically, this is precisely the behavior we observe in our experiments (Fig. \ref{fig:task_uncertainty}).

\section{Performance interpolation between ranks}
\label{sec:performance_interpolated}

A central claim of our work is that NSNs provide granular control over the compute-performance trade-off. This implies not only strong performance at a discrete set of trained ranks but also reliable behavior at \textit{interpolated} ranks that were not explicitly part of the optimization objective. In the absence of theoretical guarantees, one might expect performance to be unpredictable or even collapse between these well-trained points. In this section, we provide the formal underpinnings for the smooth and well-behaved nature of the performance-compute frontier induced by NSNs. We begin by introducing a mild assumption on the structure of the learned weights, from which we derive a formal bound on the interpolation error.

To formalize the notion of a smooth frontier, we must first characterize the structure that our multi-rank uncertainty training (Section \ref{sec:multi-rank-uncertainty}) imposes on the parameterization. We posit that the optimization encourages a natural ordering of basis vectors by importance.

\begin{greycustomblock}
\begin{assumption}[Rank-1 Component Energy Decay]
\label{ass:decay}
The training procedure yields a parameterization where the norms of the rank-1 component vectors are monotonically non-increasing with their index. For any indices $i, j$ such that $1 \le i < j \le \tilde{R}$:
$$ \norm{\vec{a}_j} \le \norm{\vec{a}_i} \quad \text{and} \quad \norm{\vec{b}_j} \le \norm{\vec{b}_i} $$
\end{assumption}
\end{greycustomblock}

Our training objective motivates this assumption, since it naturally encourages the model to allocate the most salient information to the lowest-indexed basis vectors, as they must be utilized by all nested sub-models. We confirm this assumption empirically (Appendix \ref{subsec:energy_decay_appendix}) and show this is not satisfied in regular training regimes (Appendix \ref{subsec:dense_energy_profiles}).

By extending this result from the model's output to its expected loss, and assuming a standard regularity condition on the loss function, we can establish a bound on the difference in expected performance between any two ranks.

\begin{greycustomblock}
\begin{proposition}[Bound on Interpolation Error]
\label{prop:interp_err}
Let the task loss function $\mathcal{L}(f(\vec{x};r), y)$ be $L_{\mathcal{L}}$-Lipschitz continuous with respect to its first argument. Let $E(r) = \mathbb{E}_{(\vec{x},y)}[\mathcal{L}(f(\vec{x}; r), y)]$ be the expected error at rank $r$. For any ranks $r_1 < r_{\text{int}} < R$, the difference in expected error is bounded by:
$$ |E(r_{\text{int}}) - E(r_1)| \le C \sum_{i=r_1+1}^{r_{\text{int}}} \norm{\vec{b}_i}\norm{\vec{a}_i} $$
where $C = L_{\mathcal{L}} \cdot \mathbb{E}[\norm{\vec{x}}]$ is a task-dependent constant.
\end{proposition}
\end{greycustomblock}

Proof in Appendix \ref{sec:proof_proposition}. This result provides a formal guarantee that the variation in model performance is controlled by the cumulative energy of the intermediate basis vectors. This is important because it justifies the use of NSNs for reliable control across a continuous spectrum of computational budgets. We empirically evaluate this claim in Sec. \ref{sec:evaluating_int_ranks}.

\begin{customblockquote}
\textbf{Takeaway.} With a mild decay assumption on the learned rank-1 components, we can bound the performance change between ranks which ensures that the compute–performance trade-off remains smooth and predictable even for untrained ranks.
\end{customblockquote}

\section{Experimental evaluation}
\label{sec:exps}

Here, we highlight key properties of our method: why do we use two cross-entropy ranks, why our assumptions hold, how we can generalize to interpolated ranks and pre-trained LLMs. \footnote{The results and code can be found here: https://github.com/pauliusrauba/nested-subspace-networks}

\subsection{Is it sufficient to simply use cross-entropy on two ranks?}
\label{sec:ablations}

To better understand the mechanisms behind our proposed training strategy, we conduct a series of ablation studies. Our goal is to isolate the contribution of the core components of our objective function and to validate our design choices against plausible alternatives. We compare several training variations, evaluating their impact not only on the highest-rank model but, more importantly, on the average performance of the resulting lower-rank and interpolated sub-models.

\textbf{\textcolor{pastelblue}{Setup}}. We conduct our evaluation on an image classification task using a standard multi-layer perceptron (MLP) architecture. The dataset is CIFAR-10 throughout but we first map each image to a fixed feature representation using an ImageNet-pretrained backbone and then train only a small classifier on top of these frozen features (more details in Appendix \ref{subsec:details_cifar10}. We assess each method on three metrics: \textbf{(i)} the final test accuracy of the highest-rank (anchor) model; \textbf{(ii)} the average accuracy across all in-distribution (ID) sub-ranks evaluated during training; and \textbf{(iii)} the average accuracy across out-of-distribution (OOD) interpolated ranks not explicitly seen during training. All three variants—Logits Regularization, Residual Orthogonality, and Hidden Regularization—are implemented as independent ablations, each adding exactly one regularizer on top of the same anchor/variant “Two CEs’’ training setup, and they are never used concurrently. Details are in Appendix \ref{app:ablations_implementation}. The results are summarized in Table \ref{tab:ablations}. 

\begin{table}[h!] \centering \footnotesize \caption{Ablation study of different training objectives. Our core Two CEs formulation (highlighted) dramatically improves the performance of lower-rank (ID) and interpolated (OOD) sub-models. $\tilde{R}$ denotes the anchor rank and $r$ the variant rank. Performance is reported as mean $\pm$ std. dev.} \label{tab:ablations} \begin{tabularx}{\textwidth}{@{} l >{\raggedright\arraybackslash}X c c c @{}} \toprule \textbf{Method} & \textbf{Key Formulation} & \textbf{Highest Test Acc} & \textbf{Avg. ID Acc} & \textbf{Avg. OOD Acc} \\ \midrule \multicolumn{5}{l}{\textit{Baselines}} \\ CE Only (Anchor) & $\mathcal{L}_{\text{CE}}(\tilde{R})$ & 0.87 $\pm$ 0.00 & 0.48 $\pm$ 0.00 & 0.57 $\pm$ 0.00 \\ One CE + Hard Ortho. & $+ \left\lVert A A^T - I \right\rVert_F^2$ & 0.87 $\pm$ 0.00 & 0.42 $\pm$ 0.00 & 0.50 $\pm$ 0.00 \\ \midrule \multicolumn{5}{l}{\textit{Variations on Joint Training}} \\ \rowcolor{gray!20} Two CEs & $\mathcal{L}_{\text{CE}}(\tilde{R}) + \mathcal{L}_{\text{CE}}(r)$ & \textbf{0.88 $\pm$ 0.00} & \textbf{0.79 $\pm$ 0.00} & \textbf{0.81 $\pm$ 0.00} \\ + Logits Regularization & $+ \left\lVert \text{logits}(\tilde{R}) - \text{logits}(r) \right\rVert_2^2$ & 0.87 $\pm$ 0.00 & 0.64 $\pm$ 0.00 & 0.64 $\pm$ 0.00 \\ + Residual Orthogonality& $+ \left\lVert A_r A_{\text{res}}^T \right\rVert_F^2$ & \textbf{0.88 $\pm$ 0.00} & 0.78 $\pm$ 0.00 & 0.80 $\pm$ 0.00 \\ + Hidden Regularization & $+ \left\lVert \mathbf{h}_{\tilde{R}} - \mathbf{h}_{r} \right\rVert_2^2$ & \textbf{0.88 $\pm$ 0.00} & \textbf{0.79 $\pm$ 0.00} & 0.79 $\pm$ 0.00 \\ \bottomrule \end{tabularx} \end{table}

\textbf{\textcolor{pastelblue}{Results}}. Our analysis in Table \ref{tab:ablations} shows that the key to effective sub-model performance is the joint optimization of an anchor and a variant rank ("Two CEs"). This simple objective acts as an implicit regularizer. In contrast, we found that adding explicit regularization terms on top of our proposed objective was either redundant or detrimental. This supports our claim that the joint optimization of multiple ranks is a sufficient mechanism for learning NSNs.


\subsection{Is the energy decay assumption satisfied?}
\label{subsec:energy_decay}

\begin{wrapfigure}[12]{r}{0.4\columnwidth}
    \vspace*{-12mm}
    \centering
    \includegraphics[width=\linewidth]{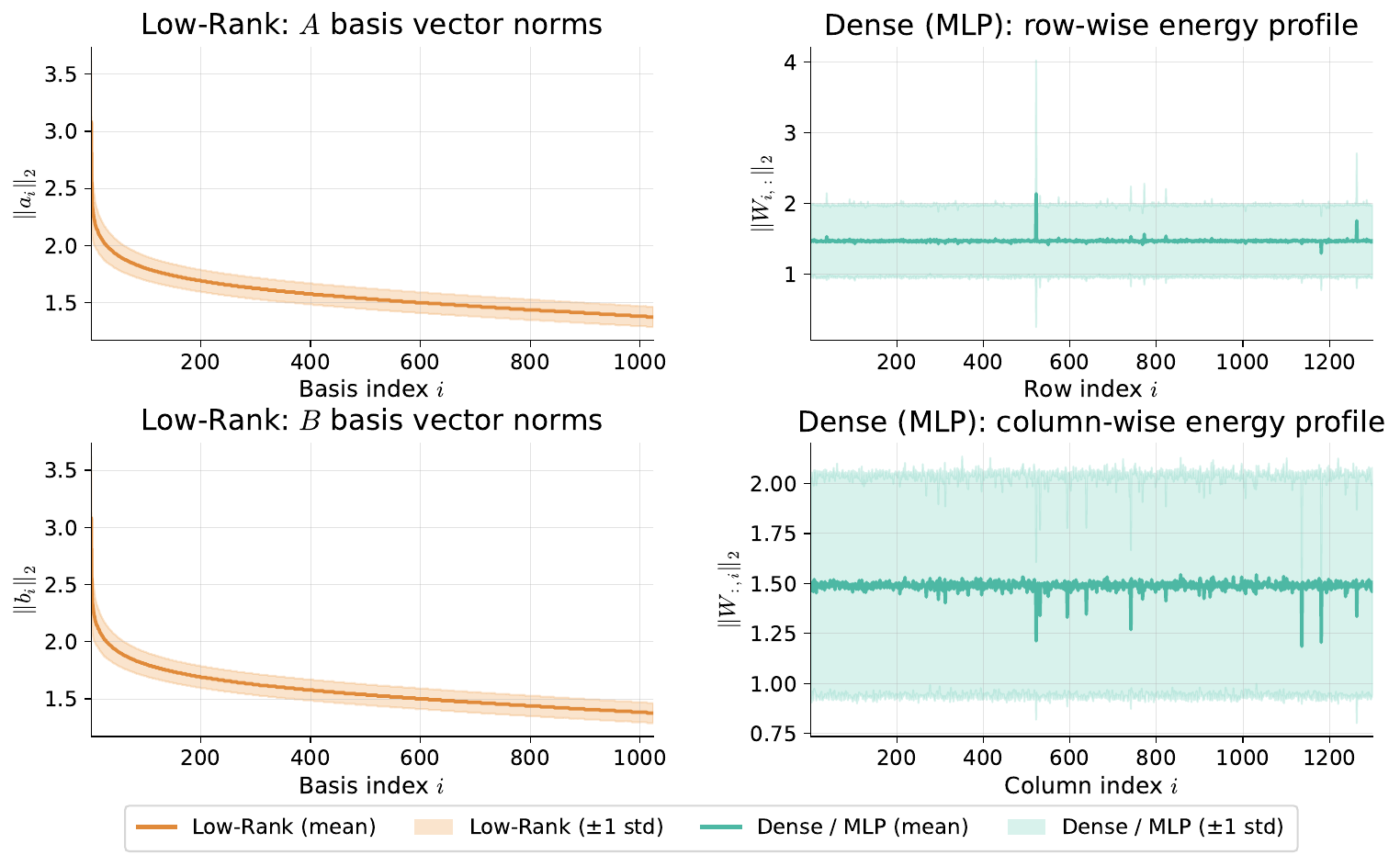}
    \caption{\textbf{Energy decay profiles}. The energy decay assumption holds in low-rank linear layers trained with our cross-entropy objective but does not hold in a standard MLP setting.}
    \label{fig:energy_decay}
    \vspace{-2mm}
    \rule{\linewidth}{.5pt}
\end{wrapfigure}
\vspace{-5pt}

\textbf{\textcolor{pastelblue}{Setup}}. We examine the energy-decay property on a Pythia-2.8B model fine-tuned for sequence classification, comparing two variants: a \emph{low-rank} model whose MLP layers are decomposed into rank-$1024$ factors $A$ and $B$ via SVD, and a standard \emph{dense} model with unmodified linear layers. For each of the 64 MLP weight matrices (32 transformer blocks $\times$ 2 projections), we compute the ordered norms $\|a_i\|_2$, $\|b_i\|_2$ (low-rank) and the row/column norms $\|W_{i,:}\|_2$, $\|W_{:,i}\|_2$ (dense), then report the mean $\pm$ one standard deviation across layers.

\textbf{\textcolor{pastelblue}{Results}}. Results are summarized in Figure~\ref{fig:energy_decay}. We find that the basis vectors in low rank layers consistently show energy, unlike standard dense MLPs.

\subsection{Does this empirically generalize to interpolated ranks?}
\label{sec:evaluating_int_ranks}
We now empirically validate the theoretical guarantees for smooth interpolation presented in Sec. \ref{sec:performance_interpolated}. We ask whether our method yields robust performance even at ranks that were not explicitly part of the optimization process, thereby satisfying our desideratum for granular budget control (Sec \ref{sec:introduction}).

\textbf{\textcolor{pastelblue}{Setup}}. To investigate this, we analyze the performance of models trained with the various objective functions from our ablation study. We train a single model using each objective, which involves a sparse sampling of ranks. We then evaluate the resulting model not only on the anchor rank but also across a wide spectrum of intermediate, interpolated ranks to assess the stability and smoothness of the learned performance curve.

\begin{figure}
    \includegraphics[width=\linewidth]{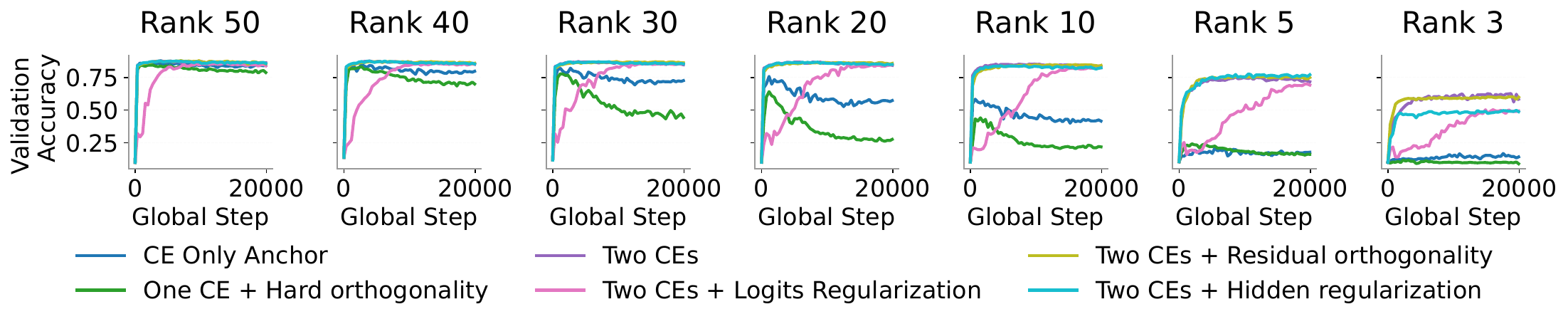}
    \caption{\textbf{Training Dynamics at Interpolated Ranks.} Validation accuracy during training for various objective functions, evaluated at ranks not explicitly optimized for. Our proposed method maintains stable learning across all ranks, while simpler baselines exhibit instability and performance collapse.}
    \label{fig:interpolation_dynamics}
    \vspace{-2mm}
    \rule{\linewidth}{.5pt}
\end{figure}

\textbf{\textcolor{pastelblue}{Results}}. Our findings show that the choice of training objective is important for achieving stable interpolation. As illustrated in Figure \ref{fig:interpolation_dynamics}, baseline objectives that do not properly balance the learning dynamics across ranks often result in unstable or collapsing performance at these intermediate points. For example, training with a single cross-entropy objective leads to poor generalization at lower ranks. In contrast, our proposed objective, which jointly trains an anchor and variant rank, produces stable and monotonically improving accuracy curves across the entire hierarchy of ranks throughout training.

\begin{wrapfigure}[18]{r}{0.45\columnwidth}
    \vspace*{-7.5mm}
    \centering
    \includegraphics[width=1\linewidth, trim=1.5cm 4.5cm 10.5cm 3.5cm, clip]{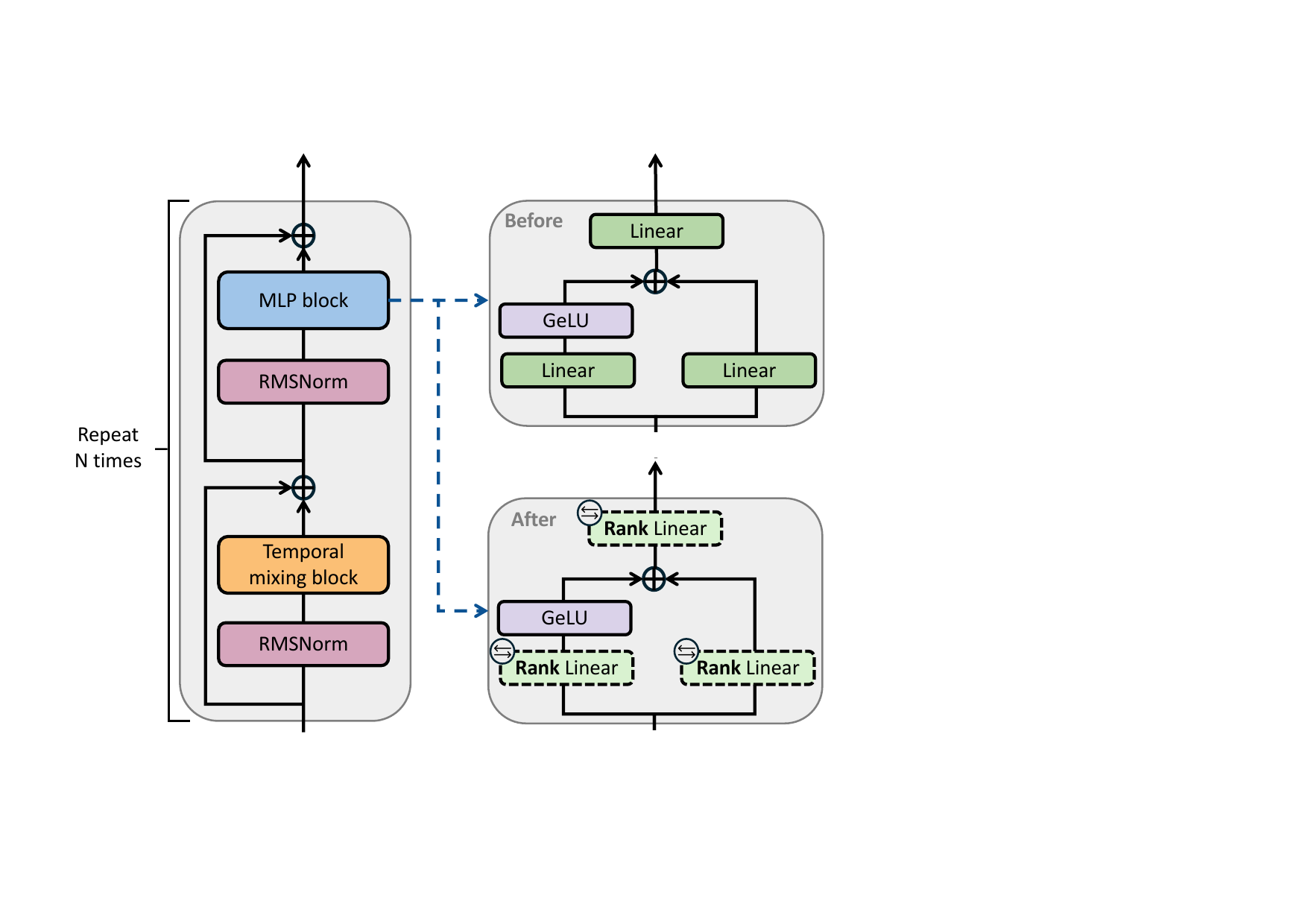}
    \caption{\textbf{Example Surgical Changes to linear layers only on Gemma-2B}. The architecture of Gemma-2B. All MLP blocks contain three linear layers with a GeLU activation function. We surgically replace all linear layers with \textit{rank-adaptive} linear layers, initialized $W \approx BA$ via SVD-decomposition}
    \label{fig:gemma_surgical}
    \vspace{-2mm}
    \rule{\linewidth}{.5pt}
\end{wrapfigure}

\subsection{Does this empirically generalize to pre-trained LLMs?}
\label{sec:LLM_application}

We now evaluate the post-hoc applicability of NSNs to large, pre-trained language models (LLMs).

\paragraph{Adapting Pre-trained Layers.} Our procedure for adapting a pre-trained LLM consists of surgically replacing the standard linear layers within its MLP blocks with NSN layers. A naive approach might be to randomly initialize the NSN factor matrices $A$ and $B$. In practice, however, this method discards the information encoded in the pre-trained weights. To preserve such information in the pre-trained weights, we initialize the NSN factor matrices using Singular Value Decomposition (Appendix \ref{appendix:svd}).

\textbf{\textcolor{pastelblue}{Setup}}. We apply this adaptation procedure to four publicly available LLMs: \textit{Pythia-2.8B, GPT-Neo-2.7B, Gemma-2B}, and \textit{Qwen2-0.5B}. After replacing and initializing the linear layers in their MLP blocks as described above, we fine-tune each model on a downstream task. For our primary analysis with Pythia-2.8B, we use a Natural Language Inference (NLI) benchmark which requires a three-way classification to determine if a premise entails, contradicts, or is unrelated to a given hypothesis. The fine-tuning for all models uses the uncertainty-aware objective described in Section \ref{sec:multi-rank-uncertainty}.

\textbf{\textcolor{pastelblue}{Results}}. Our results demonstrate that NSNs unlock a smooth and predictable compute-performance frontier for large, pre-trained models. For instance, Pythia-2.8B exhibits a  monotonic degradation in accuracy as the rank—and therefore the operational FLOPs—is reduced (Fig. \ref{fig:flops_savings}). This granular control allows for substantial efficiency gains with a modest performance trade-off; for instance, a 50\% reduction in computational cost is achieved with only a 5 percentage point drop in accuracy. We find this behavior is consistent across all four tested language models, where performance drops smoothly as the matrix rank is decreased. This establishes NSNs as an effective method for post-hoc adaptation of foundation models to dynamic inference scenarios.

\begin{figure}
    \centering
\includegraphics[width=1\linewidth]{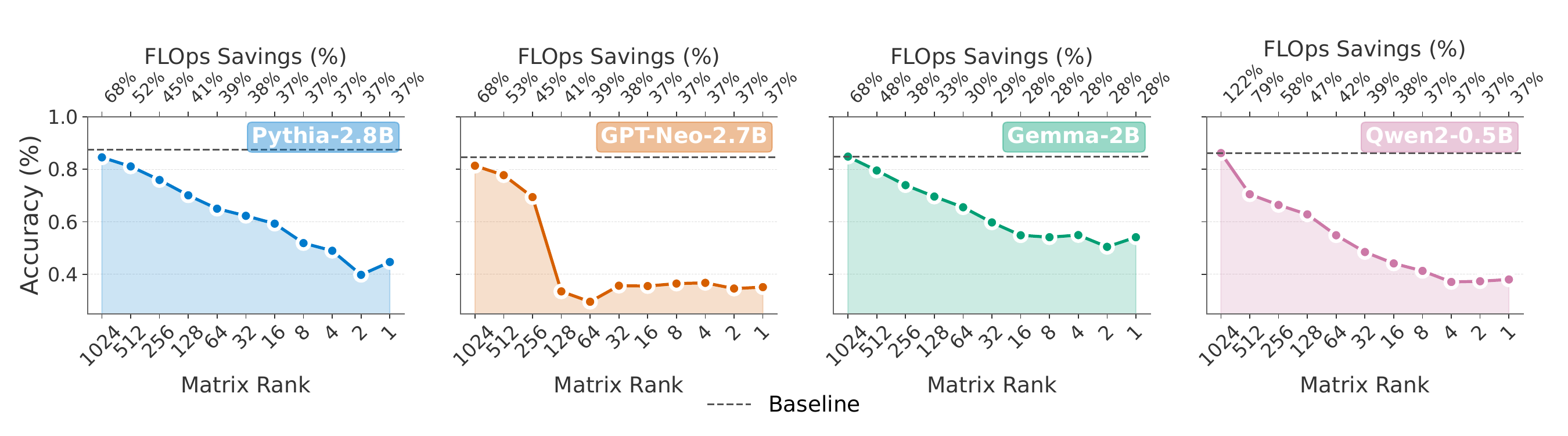}
    \vspace{-7mm}
    \caption{\textbf{Accuracy vs rank trade-off for pre-trained LLMs with surgically adapted NSN layers}. We can obtain large reduction in computational cost with minimal decreases in performance.}
    \label{fig:flops_savings}
\end{figure}

\begin{customblockquote}
\textbf{Takeaway.} NSNs can be surgically applied to large, pre-trained foundation models, which allows for smooth and predictable compute-performance trade-offs.
\end{customblockquote}

\section{Related Work}
\label{sec:related_work}
On the one hand, static compression methods aim to create smaller, more efficient, but ultimately fixed models from a larger pre-trained one. Techniques like \textbf{network pruning} \citep{han2015learning, blalock2020state} and \textbf{knowledge distillation} \citep{gou2021knowledge} excel at this, producing highly optimized artifacts for a specific computational target. However, this approach is fundamentally static; adapting the model to a new computational budget requires repeating the entire, often costly, compression pipeline, failing to provide the on-the-fly adaptability needed for dynamic environments.

On the other hand, \textbf{dynamic neural networks} \citep{han2021dynamic} are designed with inference-time adaptability in mind. \textbf{Slimmable networks}, for instance, allow for channels to be dropped dynamically to create sub-networks of varying widths \citep{yu2018slimmable}. While these methods offer the desired adaptability, they typically require specialized and complex training schemes that are applied from scratch \citep{cai2019once}. This makes them difficult to apply to the vast ecosystem of existing, pre-trained foundation models. Furthermore, many such techniques offer only a coarse, discrete set of operating points rather than a smooth, continuous trade-off \citep{teerapittayanon2016branchynet}.

\begin{table*}[t]
\centering
\caption{\textbf{A comparative analysis against our three desiderata for efficient, adaptable architectures}. Recall the three desiderata where we seek a solution that: \textbf{D1:} learns a single, unified \textit{Trade-off Parametrization}\textsuperscript{(1)} that allows for instant \textit{Test-time adaptability}\textsuperscript{(1)}; \textbf{D2:} is broadly applicable through \textit{Post-Training Re-parameterization}\textsuperscript{(2)} and exhibits \textit{Architectural Agnosticism}\textsuperscript{(2)} to modify existing pre-trained models; and \textbf{D3:} provides granular control by generating a \textit{Smooth trade-off Frontier}\textsuperscript{(3)} across a continuous spectrum of computational budgets. Our proposed method, Nested Subspace Networks (NSNs), is the first to satisfy all five criteria.}\label{tab:related_work}
\resizebox{\textwidth}{!}{%
\begin{tabular}{@{}l| >{\raggedright}m{3.5cm} >{\centering}m{4.5cm} C{2.2cm} C{2.2cm} C{3cm} C{2.2cm} C{2.2cm}@{}}
\toprule
& \textbf{Method} & \textbf{Example} &  {\shortstack{ Trade-off \\ Parametrization\textsuperscript{(1)}}} & {\shortstack{Test-time \\ Adaptability\textsuperscript{(1)}}} & {\shortstack{Post-Training \\Re-parameterization\textsuperscript{(2)}}} & {\shortstack{Architectural \\Agnosticism\textsuperscript{(2)}}} & {\shortstack{Smooth Trade-off \\ Frontier\textsuperscript{(3)}}} \\
\midrule

\multirow{3}{*}{\rotatebox{90}{LoRA}}
& Standard LoRA & \citep{hu2022lora} & \xmark & \xmark & \cmark & \cmark & \xmark \\
& DyLoRA & \citep{valipour2022dylora} & \xmark & \cmark & \cmark & \cmark & \cmark \\
& LoRA-Pruning & \citep{chen2023lorashear} & \xmark & \xmark & \cmark & \cmark & \xmark \\
\cmidrule(l){1-8}

\multirow{6}{*}{\rotatebox{90}{Other}}
& Universal Slimmable & \citep{yu2019universally} & \cmark & \cmark & \xmark & \xmark & \cmark \\
& Once-for-All (OFA) & \citep{cai2019once} & \cmark & \cmark & \xmark & \xmark & \cmark \\

& Iterative Magnitude & \citep{han2015deep} & \xmark & \xmark & \cmark & \cmark & \xmark \\
& Movement Pruning & \citep{sanh2020movement} & \xmark & \xmark & \xmark & \cmark & \xmark \\

& Response-Based KD & \citep{hinton2015distilling} & \xmark & \xmark & \cmark & \cmark & \xmark \\
& Self-Distill. (Early Exit) & \citep{teerapittayanon2016branchynet} & \xmark & \cmark & \xmark & \cmark & \xmark \\
\midrule

\multicolumn{2}{l}{\textbf{\quad Nested Subspace Networks}} & \textbf{Ours} & \cmark & \cmark & \cmark & \cmark & \cmark \\

\bottomrule
\end{tabular}
}
\end{table*}
More recently, parameter-efficient fine-tuning (PEFT) methods like \textbf{Low-Rank Adaptation (LoRA)} \citep{hu2022lora} have become a popular way to efficiently adapt large models. While LoRA also employs low-rank factorization, its goal is to learn a \textit{single, static update} for a \textit{fixed} rank $r$. It is not designed to be dynamically adjusted at inference time; changing the computational budget would require training a new LoRA adapter with a different rank. In contrast, NSNs leverage a nested low-rank parameterization to enable a single model to be granularly and dynamically adjusted across an entire spectrum of ranks at test time.

Beyond the themes discussed above, Appendix \ref{sec:extended_work} provides a broader survey situating NSNs within several additional research traditions. It outlines how classical flag-manifold methods study nested subspaces from a geometric perspective, contrasting these representation-space approaches with NSNs' parameter-space formulation. It also reviews dynamic-inference architectures such as MatFormer, Flextron, and LLAMAFLEX, highlighting how these systems rely on structural slicing or routing rather than the continuous, rank-based hierarchy central to NSNs. Finally, the appendix connects NSNs to adjacent areas—including other low-rank adaptation, adaptive and robust ML, and test-time adjustment frameworks—clarifying complementarities and highlighting the distinctm echanism how NSNs induce order, controllable capacity and enable adaptation to foundation models.
\vspace{-2mm}
\section{Discussion} 

The dominant paradigm in deploying large models involves creating static artifacts, each trained for a fixed computational budget. This approach is ill-suited for dynamic environments where resource constraints can change on-the-fly. Contrary to the view that this requires training a discrete collection of specialist models—a computationally prohibitive approach—we make a strong case that a single, well-trained dynamic network can effectively and efficiently navigate this trade-off. 

We introduced Nested Subspace Networks (NSNs), a novel architectural paradigm that represents a continuous hierarchy of models within a single set of weights. We propose a structural design based on the nested subspace property that has a practical, uncertainty-aware training objective. We show how an entire family of models can be optimized jointly. We further demonstrated that NSNs can be surgically applied post-hoc to large, pre-trained foundation models, unlocking a smooth and predictable compute-performance frontier without requiring training from scratch. This paper presents the first-ever approach to dynamically convert any pre-trained foundation model to a compute-adjustable model with minimal fine-tuning.

\textit{Why and how do NSNs work so well?} We probe this question in our insights experiments (Sec. \ref{sec:insights}). Our analysis reveals that NSNs, regularized by their low-rank structure, converge to different local minima in the loss landscape compared to standard fine-tuning. This means that NSNs find distinct yet highly effective solutions in the loss landscape that allow the family of nested sub-models to work well (Sec. \ref{sec:convergence}). We empirically verified the foundational nested subspace property, confirming that the vector spaces of lower-rank models are indeed contained within those of higher-rank models, as intended by the architectural design (Sec. \ref{sec:property_verifiaction}). Furthermore, we attempt to understand what happens to the network structure as we change the rank of the network. We find that  the low-rank constraint acts as a bottleneck that encourages layers to learn redundant, globally useful functions (Sec. \ref{sec:inter-layer}). As capacity increases with higher ranks, layers diverge, adopting more specialized roles. Currently, NSNs shrink or augment all layers to the same rank; we think an interesting--and nontrivial-- future work is to develop layer-specific mechanisms for adaptive compute. This requires, however, solving the difficult problem of correlating problem-specific information with layer-specific representational capacity, a problem that has so far attracted little attention. Our findings and insights establish NSNs as a powerful framework for creating the next generation of adaptive foundation models.

\newpage

\paragraph{Reproducibility statement} We confirm that all work and figures can be reproduced in the provided github repository.

\paragraph{Impact Statement} This work advances the field of machine learning and has many downstream applications. While we primarily expect the use of nested subspace networks to be used for advancing positive goals, e.g. making existing systems compute-adaptive and efficient, it is plausible that such techniques could be misused for terminal goals which require the same neural network capabilities. The techniques provided in this paper are designed and tested for language models but can extend well beyond them to any neural network architecture.

\paragraph{Acknowledgements} We would like to thank the reviewers for their helpful feedback. Claudio Fanconi provided very helpful feedback on an early version of this paper. We thank Atlas Wang for his public engagement with our work which has resulted in us revising our related work. This work was supported by Azure sponsorship credits granted by Microsoft’s AI for Good Research Lab.

\bibliography{iclr2026_conference}
\bibliographystyle{iclr2026_conference}

\appendix
\newpage

\section{Extended related work}
\label{sec:extended_work}
In this section, we discuss related work which is not immediately related to our discussed problem setup (Sec. \ref{sec:related_work}) yet we find important to cover to better position this work.


\paragraph{Classical flag-manifold literature} The flag-manifold literature studies nested subspaces to guarantee multi-resolution consistency in representations. Classical foundations include Pennec's formulation of PCA and barycentric subspace analysis on flags, which motivates optimizing over sequences of projectors rather than a single Grassmann subspace \citep{pennec2018barycentric}; recent geometric toolkits provide algorithms and distances for optimization on flag manifolds \citep{ye2022optimization, nguyen2022closed, zhu2024practical, ye2014schubert}. Building on this, \citet{szwagier2025nested} propose the ``flag trick'' which replaces a single projector by an average multilevel projector to enforce nestedness across dimensions. Parallel work develops flag-centric representations and statistics \citep{draper2014flag, mankovich2022flag, ma2021flag} and robust principal directions via ``flagification'' \citep{mankovich2024fun}. \citet{mankovich2025flag} introduce a flag decomposition that factorizes hierarchical datasets into a hierarchy-preserving flat via a block-modified Gram-Schmidt algorithm. The work presented in this paper is orthogonal to the flag-manifold literature and differs in mechanism, scope, and purpose. 
\begin{itemize}
    \item \textit{In terms of mechanism:} instead of optimizing \textit{data} projections on a flag manifold, NSNs enforce a \textit{parameter-space} filtration inside every linear layer, and couple ranks with an uncertainty-weighted multi-rank objective which yields a smooth compute-performance frontier. 
    \item \textit{In terms of scope}, flag-based approaches treat the nested structure as the object of optimization in representation space and are typically applied to moderate-dimensional linear features, whereas NSNs treat nestedness as an architectural prior that can be injected into arbitrary deep networks (e.g., transformers, CNNs). 
    \item \textit{In terms of purpose}: the \textit{nature} of NSN work is highly practical--we advance the dynamic inference paradigm by introducing an algorithm that obtains different properties from other dynamic inference algorithms. We see that the existing flag-based literature can be applied to enhance our proposed modeling paradigm, but our work does not rely on the explicit setups within the clssical flag-manifold literature.
\end{itemize}

\paragraph{Slimmable and Universally Slimmable networks} Slimmable \citep{yu2018slimmable} and universally-slimmable networks \citep{yu2019universally} train a single model that runs at multiple channel widths, enabled by ``sandwich``and in-place distillation rules (that this paper shows are unnecessary). In contrast to slimmable networks, whose behavior between trained widths is controlled only empirically through regularizers such as the sandwich rule, the nested subspace structure of NSNs lets us bound the change in expected loss between any two ranks, yielding a theoretically controlled interpolation between compute budgets. Moreover, universally slimmable networks \citep{yu2019universally} require width-specific normalization statistics and repeated width sampling during training, which makes them costly and hard to use as post-hoc adapters for large foundation models, whereas NSNs share a single set of parameters and normalization across all ranks and can be added by a short SVD-based fine-tuning phase.

\paragraph{Further discussion on dynamic inference via nested/elastic architectures} Once-for-all extends this idea by training a supernet whose sub-networks are specialized post-training \citep{cai2019once}. For Transformers, MatFormer nests feed-forward blocks to slice width at inference, relies on supernet training and produces a finite sub-model choice set \citep{devvrit2024matformer}. Flextron \citep{cai2024flextron} converts a pretrained LLM into a nested elastic network and learns routers (static or input‑adaptive) that select heads/neurons per budget, after continued training; it provides many sub‑models but through discrete head/width choices and routing rather than a single operator with continuous capacity. LLAMAFLEX \citep{cai2025llamaflex} similarly starts from a pretrained LLM and trains a weight‑shared, depth/width‑slicible architecture and a Gumbel‑Softmax router to ``train once, deploy many'', interpolating between a set of anchor budgets. NSNs, compared to LLAMAFLEX, take a very different approach: we reparameterize linear layers into nested low-rank subspaces and train a single hierarchy of ranks, yielding smooth, theoretically bounded performance–compute curves even at interpolated ranks. Therefore, NSNs are particularly attractive when architecture-agnostic, training-efficient, and granular control over compute budgets is required, rather than the router-driven depth/width slicing emphasized by LLAMAFLEX. Early‑exit and dynamic‑depth approaches like BrancyNet\citep{teerapittayanon2016branchynet}, LayerDrop \citep{fan2019reducing} or DeeBert \citep{xin2020deebert} cut computation by skipping layers or exiting early which changes \textit{where} computation happens, not \textit{how expressive} each linear map is. They also yield discrete exits and require auxiliary heads or training‑time regularization. Another close in spirit approach is SortedNet \citep{valipour2023sortednet} which enforces a generalized ``sorted'' (partly nested) parameter sharing scheme and trains many discrete sub-models via random sub-model sampling with gradient accumulation. In contrast to all of these, NSNs re‑parameterize each linear layer as a single pair of factors whose first $r$ rank-1 components define an exact subspace of the $r+1$ model. We show that jointly optimizing ranks with an uncertainty-weighted two-rank objective gives smooth predictable interpolation across all ranks with theoretical guarantees, that we can employ SVD initialization to allow post-hoc surgical adaptation to pre-trained LLMs (without relying on knowledge-distillation, neural architecture searches or other architecture-specific work), and that this enables clear parameter sharing where the most important information is naturally ordered in the basis vectors based on the order of the ranks (which we show in Appendix \ref{subsec:energy_decay_appendix}).

\paragraph{Low-rank adaptation literature} A similar but functionally different literature is the low-rank adaptation and layer-adaptive rank selection literature. LoRA \citep{hu2022lora} and its adaptive variants modify or fine-tune models in low rank, but they target fixed ranks per deployment; AdaLoRA 
\citep{zhang2023adalora} reallocates rank across layers during fine-tuning yet still produces a static configuration at inference. WeLore \citep{jaiswal2024low} studies why low rank emerges in LLMs (via gradient-Hessian subspace stabilization), then performs one-shot uniform rank projection and an LRC-focused PEFT approach. While it offers strong compression and fine-tuning, it chooses a fixed per-layer rank profile instead of training one model to operate continuously across many ranks at test time. DynaBERT \citep{hou2020dynabert} provides dynamic width/depth BERT variants through distillation and importance re-writing--again discrete structural sub‑networks rather than a single operator with nested rank. Complementary theoretical work analyzes the implicit regularization of overparameterized matrix factorization for matrix completion, showing that gradient flow traverses a hierarchy of invariant manifolds and that the limiting solution transitions from minimum nuclear norm to minimum rank as the connectivity of the observed entries increases \citep{bai2024connectivity}. This is related in spirit to NSNs—both exploit low-rank factorizations trained by gradient methods—but our approach explicitly parameterizes a nested rank hierarchy to shape the compute–performance frontier, rather than relying solely on such connectivity-driven implicit biases. Relative to these lines, NSNs: (i) target rank as the adaptation axis so the function class at rank $r$ is a strict subset of rank $r + 1$; (ii) train all ranks jointly with gradient-balancing; (iii) guarantees smooth performance-compute frontiers; (iv) is post-hoc applicable to pre-trained foundation models; and (v) is a standalone model that adapts all weights instead of relying on an adapter on top of frozen weights.


\paragraph{Adaptive and robust machine learning methods} Our work can be also seen as being related to the adaptive and robust machine learning literature. Concretely, NSNs provide an adaptavle mechanism to control performance at test-time. Other such models exist within this area. Work on adapting machine learning models at test time is rich, both in terms of looking at re-training them \citep{lu2018learning,raza2014adaptive,rabanser2019failing,bayram2022concept}, detecting errors \citep{agrahari2022concept,gama2004learning,halstead2022analyzing}, using adaptive algorithms \citep{farid2013adaptive, hulten2001mining, dries2009adaptive} or more robust autonomous approaches for hypothesis-driven adaptation and testing \citep{rauba2024self}. We find many of these works complementary, whereby the adaptation  approaches might benefit from Nested Subspace Network-type architectures. However, more research is needed to create mechanisms how to combine NSN-type models with adaptive ML. For instance, adaptive models could be deployed to select which sub-model to use at test-time, while hypothesis generation algorithms \citep{xiong2024improving} or context-aware testing \citep{rauba2024context} can be used to operationally decide what is the minimally performant model required to test a particular set of procedures for cost-sensitive testing. One way to achieve this could be to select the nested subspace model that best satisfies the required criteria after performing model auditing across different sub-models \citep{rauba2025statistical} or selecting the cheapest sub-model that still matches the requirements of more expensive models via cascade frameworks \citep{fanconi2025cascaded}. Therefore, our work can be easily extended to the adaptive ML literature but does not directly compete against it.
 
\section{Experimental Details}

\subsection{Details on Native-rank training vs rank-truncation}
\label{exp:truncation}

The experiment presented in Figure \ref{fig:native_rank_training_main} aims to contrast two approaches for obtaining models at various computational budgets: (i) native-rank training and (ii) rank truncation. All experiments were conducted on the CIFAR-10 dataset using ImageNet embeddings as input to a Multi-Layer Perceptron (MLP).

\textbf{Native Rank}. For the "Native rank training" baseline, we trained a series of independent specialist models. Each model corresponds to a specific rank $r \in \{1, 2, ..., 64\}$. The linear layers of each MLP were parameterized using a low-rank factorization $W = BA$, where the inner dimension was fixed to the target rank $r$. Every model was trained from scratch for 30 epochs using the same hyperparameters to represent the empirical Pareto frontier of performance for a given rank.

\textbf{Rank Truncation}. For the "Rank-64 training" comparison, we trained a single model with a maximum rank of $R=64$. At test time, to evaluate performance at a lower rank $r < R$, we simply truncated the factor matrices $A \in \mathbb{R}^{R \times d_{in}}$ and $B \in \mathbb{R}^{d_{out} \times R}$. Specifically, the truncated weight matrix $W_r$ was constructed using only the first $r$ rows of $A$ and the first $r$ columns of $B$. This ensures the nested subspace property is structurally satisfied, but as Figure \ref{fig:native_rank_training_main} shows, this naive approach fails to train the shared parameters to be effective across the hierarchy of ranks.

\subsection{Implementation details for the ablations in Table~\ref{tab:ablations}}
\label{app:ablations_implementation}

For clarity, we summarize how the three ablation variants in Table~\ref{tab:ablations}---\emph{Logits Regularization}, \emph{Residual Orthogonality}, and \emph{Hidden Regularization}---are implemented and how they relate to the core ``Two CEs'' objective.

\paragraph{Common setup: anchor / variant ranks and base objective.}
In all joint-training variants we select an anchor rank $\tilde{R}$ and a strictly smaller variant rank $r$ from the predefined rank set used throughout the paper. Evaluating the NSN at a given rank $k$ yields logits $f(x; k)$ and the corresponding cross-entropy loss
\[
\mathcal{L}_{\mathrm{CE}}(k) = \mathrm{CE}\big(f(x; k), y\big).
\]

The base objective (Table~\ref{tab:ablations}, row ``Two CEs'') is
\[
\mathcal{L}_{\mathrm{TwoCE}} \;=\; \mathcal{L}_{\mathrm{CE}}(\tilde{R}) \;+\; \mathcal{L}_{\mathrm{CE}}(r).
\]
In the main experiments of Sec.~\ref{sec:multi-rank-uncertainty} we use the uncertainty-weighted surrogate
\[
\mathcal{L}_{\mathrm{TwoCE}}^{\mathrm{unc}} \;=\;
\big(\exp(-s_{\tilde{R}})\,\mathcal{L}_{\mathrm{CE}}(\tilde{R}) + s_{\tilde{R}}\big)
+
\big(\exp(-s_{r})\,\mathcal{L}_{\mathrm{CE}}(r) + s_{r}\big),
\]
with learned log-variances $s_{\tilde{R}}$ and $s_r$.  
All ablations keep this structure fixed and differ only by adding a single extra regularizer.  
Each row in Table~\ref{tab:ablations} corresponds to a separate training run; we never activate multiple additional regularizers at once.

\paragraph{Logits Regularization (``+ Logits Regularization'').}
This variant encourages the logits at ranks $\tilde{R}$ and $r$ to match:
\[
\mathcal{L}_{\mathrm{logit}} \;=\; \big\| f(x;\tilde{R}) - f(x;r) \big\|_2^2,
\]
implemented as MSE with the anchor logits treated as a fixed target (no gradient through $f(x;\tilde{R})$).  
Introducing a log-variance $s_{\mathrm{logit}}$, the objective becomes
\[
\mathcal{L}_{\mathrm{TwoCE+Logits}} \;=\; \mathcal{L}_{\mathrm{TwoCE}}^{\mathrm{unc}}
\;+\; \tfrac{1}{2}\exp(-s_{\mathrm{logit}})\,\mathcal{L}_{\mathrm{logit}}
\;+\; \tfrac{1}{2}s_{\mathrm{logit}}.
\]
This matches the \texttt{ce\_with\_consistency} branch in the code.

\paragraph{Residual Orthogonality (``+ Residual Orthogonality'').}
Each NSN layer has basis matrix $A \in \mathbb{R}^{R \times d_{\text{in}}}$, where row $i$ represents the $i$-th rank-1 direction.  
Given $\tilde{R} > r$, we decompose
\[
A_r = A_{1:r,:}, \qquad
A_{\mathrm{res}} = A_{r+1:\tilde{R},:}.
\]
We penalize overlap between these subspaces via
\[
\mathcal{L}_{\mathrm{ortho}} \;=\;
\sum_{\text{NSN layers}} \big\| A_r A_{\mathrm{res}}^\top \big\|_F^2.
\]
With log-variance $s_{\mathrm{ortho}}$, the objective is
\[
\mathcal{L}_{\mathrm{TwoCE+ResOrtho}} \;=\;
\mathcal{L}_{\mathrm{TwoCE}}^{\mathrm{unc}}
\;+\; \exp(-s_{\mathrm{ortho}})\,\mathcal{L}_{\mathrm{ortho}}
\;+\; s_{\mathrm{ortho}}.
\]
This corresponds to the \texttt{ce\_orthogonality} implementation.

The baseline ``One CE + Hard Ortho.'' in Table~\ref{tab:ablations} instead uses only $\mathcal{L}_{\mathrm{CE}}(\tilde{R})$ together with a global orthogonality penalty $\|A A^\top - I\|_F^2$ enforcing approximate row-orthonormality (implemented in \texttt{ce\_orthogonality\_hard}).

\paragraph{Hidden Regularization (``+ Hidden Regularization'').}
For each NSN layer we obtain pre-activation hidden representations at ranks $\tilde{R}$ and $r$:
\[
h_{\tilde{R}}^{(\ell)} \in \mathbb{R}^{B \times d_\ell}, \qquad
h_{r}^{(\ell)}        \in \mathbb{R}^{B \times d_\ell}.
\]
We normalize each along the feature dimension,
\[
\hat{h}_{\tilde{R}}^{(\ell)} = \mathrm{normalize}(h_{\tilde{R}}^{(\ell)}), \qquad
\hat{h}_{r}^{(\ell)}        = \mathrm{normalize}(h_{r}^{(\ell)}),
\]
and define a consistency loss
\[
\mathcal{L}_{\mathrm{feat}} \;=\;
\sum_{\ell} \big\| \hat{h}_{\tilde{R}}^{(\ell)} - \hat{h}_{r}^{(\ell)} \big\|_2^2.
\]
With log-variance $s_{\mathrm{feat}}$, the full objective is
\[
\mathcal{L}_{\mathrm{TwoCE+Hidden}} \;=\;
\mathcal{L}_{\mathrm{TwoCE}}^{\mathrm{unc}}
\;+\; \tfrac{1}{2}\exp(-s_{\mathrm{feat}})\,\mathcal{L}_{\mathrm{feat}}
\;+\; \tfrac{1}{2}s_{\mathrm{feat}}.
\]
This matches the \texttt{ce\_with\_feature\_consistency} branch in the code.

\paragraph{Summary.}
All rows in the ``Variations on Joint Training'' block share the same NSN architecture and the same anchor/variant rank selection.  
The base ``Two CEs'' loss uses cross-entropies at the two ranks; each ablation adds exactly one additional regularizer with its own uncertainty weight.  
Each variant is trained independently and evaluated separately.

\subsection{Details on CIFAR-10 Embeddings and the MLP setup}
\label{subsec:details_cifar10}
The \emph{dataset} is CIFAR-10 throughout but we first map each image to a fixed feature representation using an ImageNet-pretrained backbone and then train only a small classifier on top of these frozen features.

\paragraph{Backbone feature extractor.}
We use the \texttt{torchvision} implementation of ResNet-18 with ImageNet-1K pre-trained weights (\texttt{ResNet18\_Weights.IMAGENET1K\_V1}) as a fixed feature extractor. We remove the final classification layer and keep the network up to the global average pooling stage, yielding a mapping \(\phi : \mathbb{R}^{3 \times 32 \times 32} \rightarrow \mathbb{R}^{512} \)
where $\phi(x)$ is a 512-dimensional feature vector.  
CIFAR-10 images are resized to $224 \times 224$ and normalized with standard ImageNet mean and variance before being passed through $\phi$. For each split (train/test), we precompute and store pairs $(\phi(x_i), y_i)$. The ResNet backbone is kept in evaluation mode and never updated during any of the NSN experiments; gradients are disabled for all its parameters.

For the \emph{Base MLP} baselines, both linear layers are standard dense linear transformations. For the \emph{NSN} (``Low Rank Layer'') models, \emph{both} linear layers are implemented as NSN layers (Definition~1), i.e., they use the nested low-rank factorization with a shared maximum rank $R$ and can be evaluated at any active rank $r \leq R$. No other part of the network (in particular, the ResNet-18 backbone) is re-parameterized or modified by NSNs. Thus, the CIFAR-10 experiments should be understood as: (i) CIFAR-10 images $\rightarrow$ frozen ImageNet-pretrained ResNet-18 $\rightarrow$ 512-D features, and (ii) all subsequent trainable layers in the MLP classifier are NSN (or dense) layers as specified in Table~\ref{tab:ablations}.

\section{Additional experiments}
\label{sec:insights}

\subsection{Inter-Layer Weight Similarity vs. Matrix Rank}
\label{sec:inter-layer}
\paragraph{Objective.}
This experiment investigates the relationship between the representational capacity of layers and their functional roles within the network. The core question is whether increasing layer capacity (i.e., matrix rank) encourages layers to learn more specialized, distinct functions or more similar, redundant ones. The guiding hypothesis is that a low-rank constraint acts as an informational bottleneck, forcing layers to learn redundant, globally useful functions, while increasing the rank enables and encourages functional specialization.

\paragraph{Methodology.}
For a trained Nested Subspace Network, we analyzed the weight matrices of the MLP layers at various ranks. For each rank $r$ from 1 to 1024, we reconstructed the effective weight matrix $W_r$ for each layer. We then computed the pairwise cosine similarity between the weight matrices of all layers in the network. The average of these pairwise similarities was then plotted against the matrix rank to observe the overall trend.

\paragraph{Results and Interpretation.}
The results, shown in Figure \ref{fig:inter_layer_similarity}, confirm the hypothesis. At very low ranks, the average inter-layer similarity is high, indicating that the network's layers learn functionally similar and redundant representations. As the matrix rank and thus the layer capacity increase, the average cosine similarity between layers steadily decreases, approaching zero at the highest ranks. This suggests that with greater representational freedom, layers diverge to assume more specialized roles within the network. The low-rank constraint effectively regularizes the network, forcing layers to cooperate on learning general features, while higher ranks allow for a more distributed and specialized division of labor.

\begin{figure}[h!]
    \centering
    \includegraphics[width=1\linewidth]{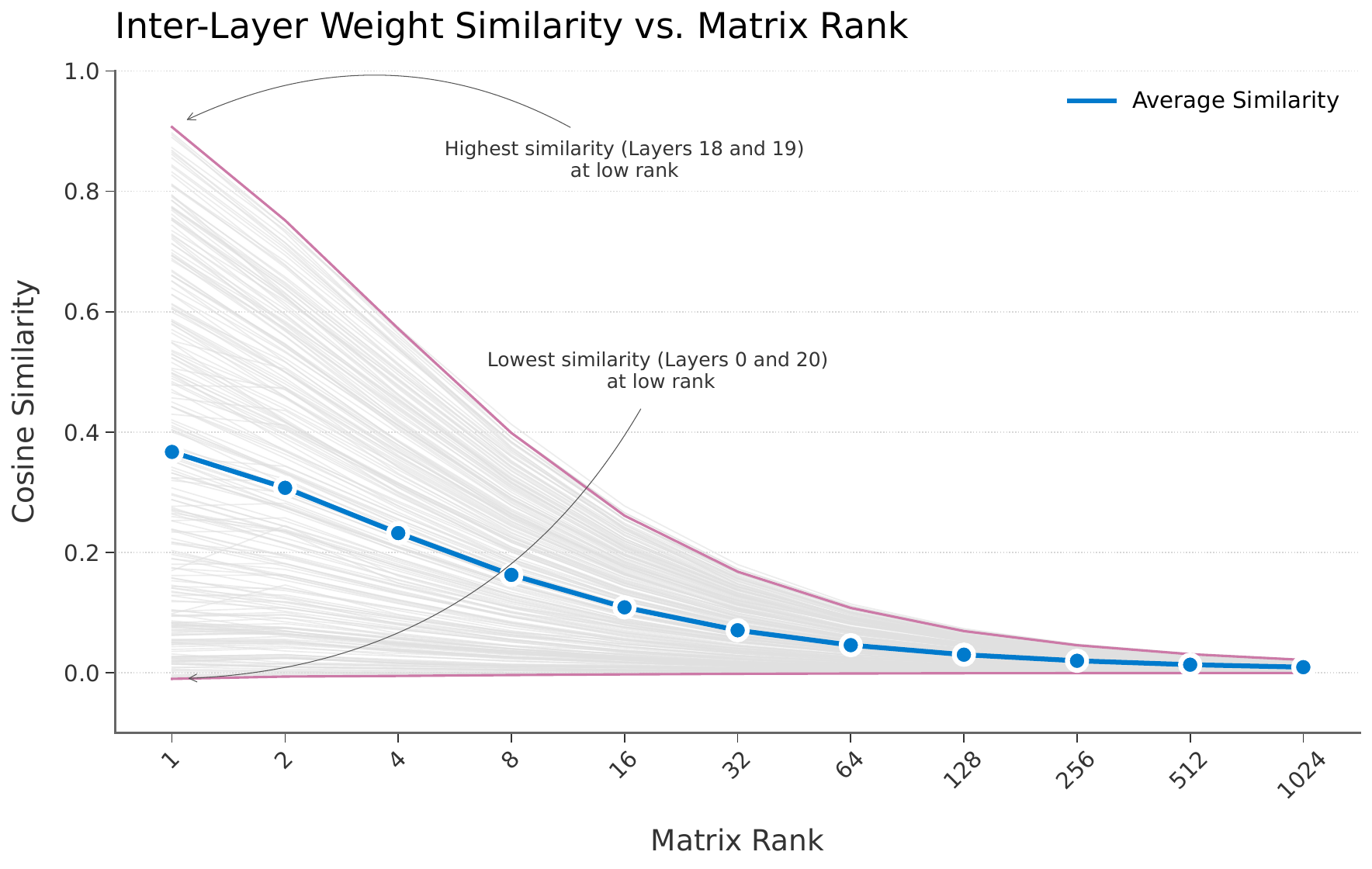}
    \vspace{-7mm}
    \caption{Inter-layer weight similarity as a function of matrix rank. As layer capacity (rank) increases, the average cosine similarity between layers decreases. This suggests that layers transition from learning redundant, globally useful functions at low ranks to more specialized roles at high ranks.}
    \label{fig:inter_layer_similarity}
\end{figure}

\subsection{Empirical Verification of the Nested Subspace Property}
\label{sec:property_verifiaction}
\paragraph{Objective.}
The central design of Nested Subspace Networks relies on the nested subspace property, where the function computed at a given rank is a strict subspace of the function at any higher rank. This experiment was designed to empirically verify if this theoretical property holds in practice after training. The key question is: does the vector space spanned by a lower-rank weight matrix truly lie inside the vector space of a higher-rank matrix from the same trained layer?

\paragraph{Methodology.}
To quantify the degree of subspace containment, we used a three-step procedure for each trained NSN layer:
\begin{enumerate}
    \item \textbf{Reconstruct Weights:} For a pair of ranks, $r_{\text{small}}$ and $r_{\text{large}}$, we reconstructed their effective weight matrices, $W_{r_{\text{small}}}$ and $W_{r_{\text{large}}}$.
    \item \textbf{Find Orthonormal Bases:} We performed Singular Value Decomposition (SVD) on each weight matrix ($W_r = U_r \Sigma_r V_r^T$) to find an orthonormal basis for its column space. The first $r$ columns of the resulting $U_r$ matrix form this basis.
    \item \textbf{Calculate Containment Score:} We computed a containment score to measure the extent to which the smaller subspace is contained in the larger one. The score is defined as the normalized Frobenius norm of the projection of the smaller basis onto the larger basis:
    $$ \text{score}(r_{\text{small}}, r_{\text{large}}) = \frac{1}{r_{\text{small}}} \| U_{r_{\text{large}}}^T U_{r_{\text{small}}} \|_F^2 $$
    A score of 1.0 indicates that the smaller subspace is perfectly contained within the larger one.
\end{enumerate}

\paragraph{Results and Interpretation.}
The results are visualized in the heatmap in Figure \ref{fig:subspace_containment}. The upper triangle of the matrix, where $r_{\text{large}} \ge r_{\text{small}}$, shows scores that are consistently 1.0 or very close to it. This empirically confirms that the vector space of a lower-rank model is indeed a nested subspace of any higher-rank model after training. The lower triangle, where $r_{\text{large}} < r_{\text{small}}$, shows scores significantly less than 1.0. This asymmetry is expected and acts as a sanity check, confirming that a higher-dimensional space cannot be fully contained within a lower-dimensional one.

\begin{figure}[h!]
    \centering
    \includegraphics[width=1\linewidth]{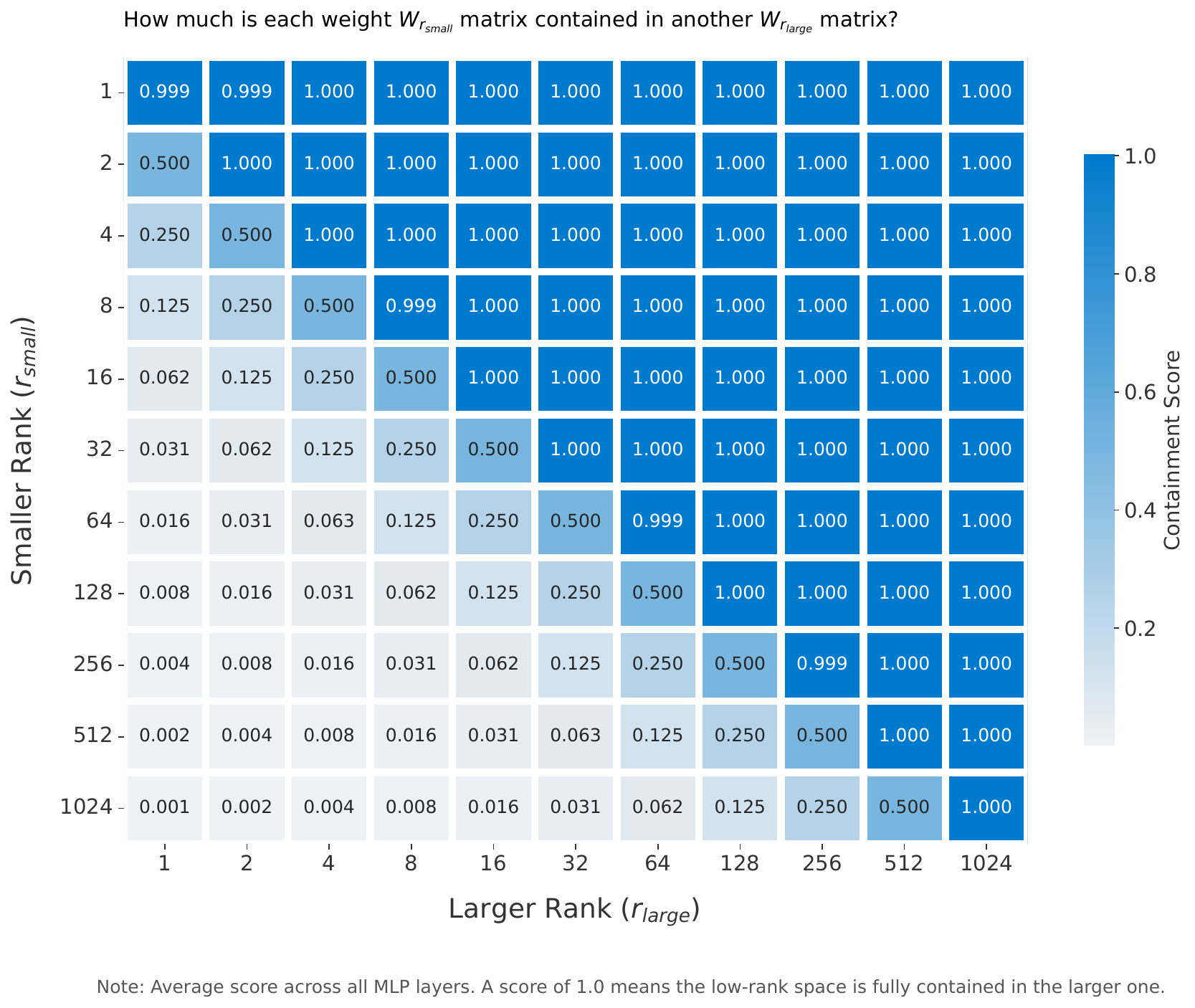}
    \vspace{-7mm}
    \caption{Heatmap of subspace containment scores between weight matrices of different ranks. The score measures the extent to which the column space of a lower-rank matrix ($r_{\text{small}}$) is contained within that of a higher-rank matrix ($r_{\text{large}}$). A score of 1.0 (dark blue) indicates full containment. The results empirically validate the foundational nested subspace property of the trained network.}
    \label{fig:subspace_containment}
\end{figure}

\subsection{Convergence Analysis of Low-Rank vs. Standard Fine-Tuning}
\label{sec:convergence}
\paragraph{Objective.}
This experiment investigates whether a model trained with Nested Subspace layers converges to the same solution in the weight space as a model trained with standard fine-tuning. The hypothesis is that the two models will find different solutions, as the low-rank structure of NSNs acts as a form of regularization that guides the optimization process toward a different local minimum.

\paragraph{Methodology.}
To compare the final learned weights, a standard model was fine-tuned on the task, and a separate NSN-equipped model was trained using our proposed multi-rank objective. For the NSN model, the effective weight matrix for each layer was reconstructed at various ranks. We then computed the cosine similarity between the weight matrix of a layer from the standard fine-tuned model and the corresponding reconstructed matrix from the NSN model. This comparison was performed for all MLP layers, which were grouped into early (0-10), middle (11-20), and late (21-31) stages of the network to observe depth-dependent trends.

\paragraph{Results and Interpretation.}
As shown in Figure \ref{fig:low_rank_convergence}, the weight matrices of the NSN model do not converge to the same solution as the standard fine-tuned model. The cosine similarity increases with the rank, but even at the highest rank (1024), the similarity is only around 85

This result supports the hypothesis that the nested low-rank structure imposes a regularization effect. By constraining the possible solutions to lie within pre-defined low-rank subspaces, the training process is guided to a different local minimum in the loss landscape than standard, unconstrained fine-tuning. This suggests that NSNs discover a different, yet highly effective, set of parameters for solving the task.

\begin{figure}[h!]
    \centering
    \includegraphics[width=0.8\linewidth]{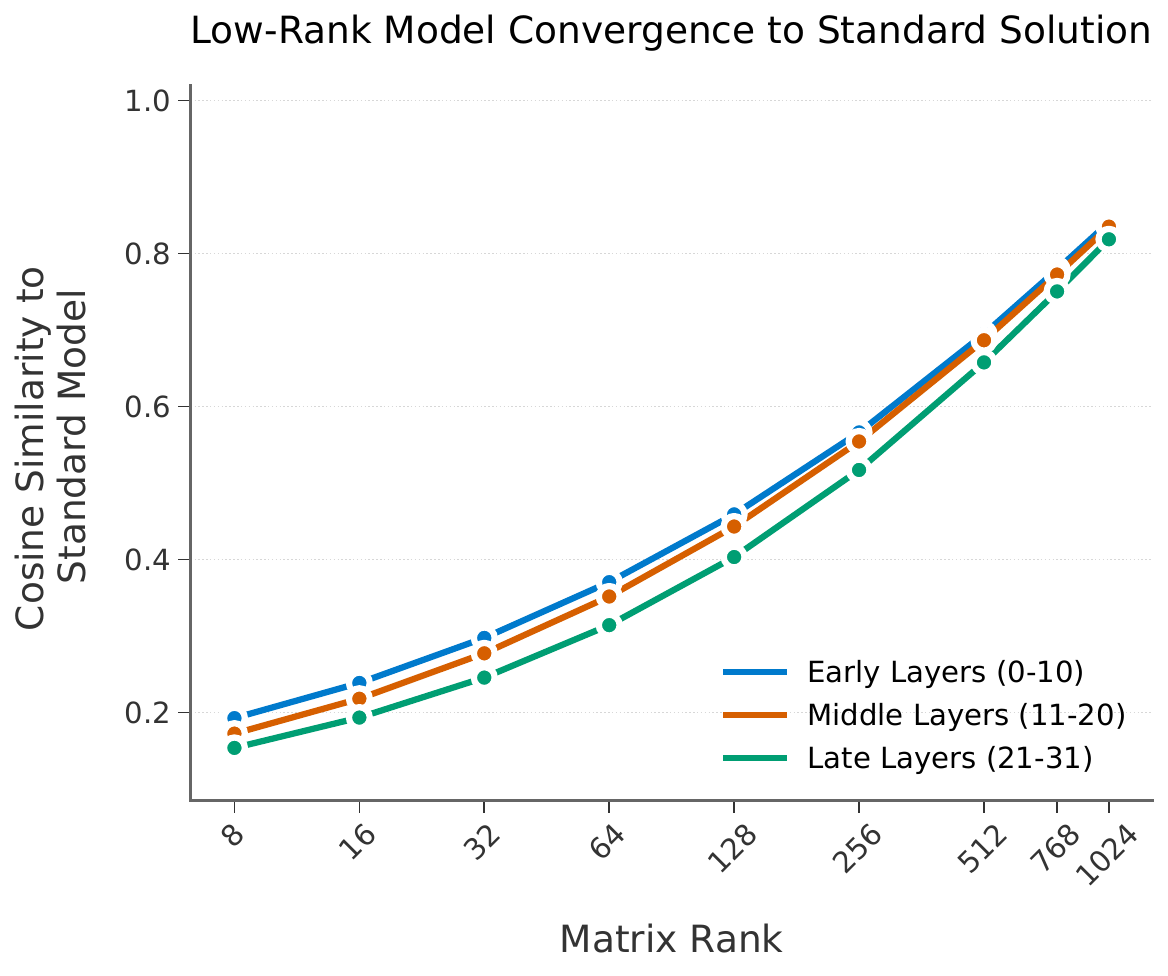}
    \vspace{-3mm}
    \caption{Cosine similarity between weight matrices from a standard fine-tuned model and a Nested Subspace Network. The similarity increases with rank but never reaches 1.0, indicating that the low-rank constraint guides the NSN to a different, yet effective, local minimum in the loss landscape.}
    \label{fig:low_rank_convergence}
\end{figure}

\subsection{Verifying Energy Decay Assumption}
\label{subsec:energy_decay_appendix}
To empirically evaluate Assumption~1, we perform an empirical investigation on a chosen language model and multiple layers within this model. Specifically, we directly inspect the learned basis vectors of every \texttt{DynamicLowRankLinear} layer in the NSN-adapted GPT-NeoX model (chosen for convenience).

\textbf{Setup}. Each such layer is parameterized as $W_r = \sum_{i=1}^r \mathbf{b}_i \mathbf{a}_i$, where the rows of $A$ provide the $\mathbf{a}_i$ components and the columns of $B$ provide the $\mathbf{b}_i$ components.  
For every layer, we compute the Euclidean norms $\|\mathbf{a}_i\|_2$ and $\|\mathbf{b}_i\|_2$ across all rank-1 components $i = 1,\dots,R$ and test whether these sequences are monotonically non-increasing.  
This monotonicity captures the ``energy decay’’ structure posited by Assumption~1, which states that earlier basis components should contain more salient functional information than later ones.  
For each layer, we report: the maximum rank $R$, whether monotonicity holds (T/F), the number of violations, and the magnitude of the first and last component norms.  
This provides a layer-by-layer diagnostic of how strongly the trained model conforms to the nested subspace ordering implied by our theoretical analysis.

\textbf{Takeaway}. The results reveal that most layers exhibit a clear decaying trend in the norms of their rank-1 components, even when strict monotonicity is not perfectly satisfied.  Violations are typically small and localized, while the overall decrease between the first and last components remains substantial. In total, they constitute 0.04\% of all basis vector orderings which is extremely negligible; and we interpret this as noise in the optimization process. This provides strong empirical support for Assumption~1: the optimization process tends to allocate high-energy, high-importance directions to early basis indices, enabling the smooth interpolation behavior and predictable compute–performance trade-offs that NSNs rely on.

To make the picture more precise, we also added a layer-by-layer table reporting how often the assumption is locally violated. These violations happen occasionally—typically one or two indices within a layer. We estimate this amounts to only about one percent of all basis vectors. Because they are sparse and small, and given how consistently (about 99\% of all basis vectors) this assumption holds, we interpret them as noise in the optimization process. them as minor fluctuations introduced by the optimization dynamics (Table \ref{table:energy_decay}).

\begin{figure}[t]
\centering
\begin{subfigure}[t]{0.48\textwidth}
    \centering
    \includegraphics[width=\linewidth]{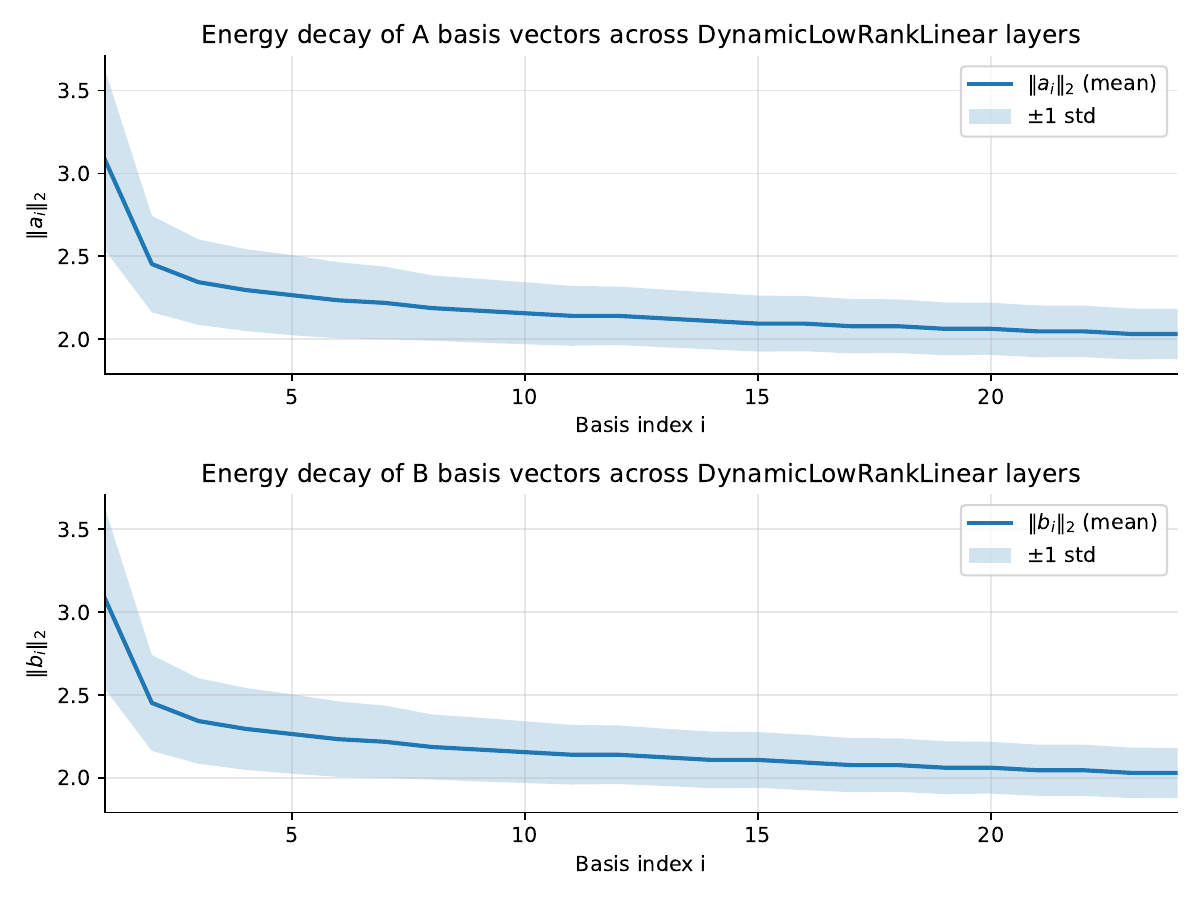}
    \caption{\textbf{Short-horizon energy decay.}  
    Mean and standard deviation of $\|\mathbf{a}_i\|_2$ and $\|\mathbf{b}_i\|_2$ over the first 24 basis components, showing the local decay structure across layers.}
    \label{fig:energy-decay-short}
\end{subfigure}
\hfill
\begin{subfigure}[t]{0.48\textwidth}
    \centering
    \includegraphics[width=\linewidth]{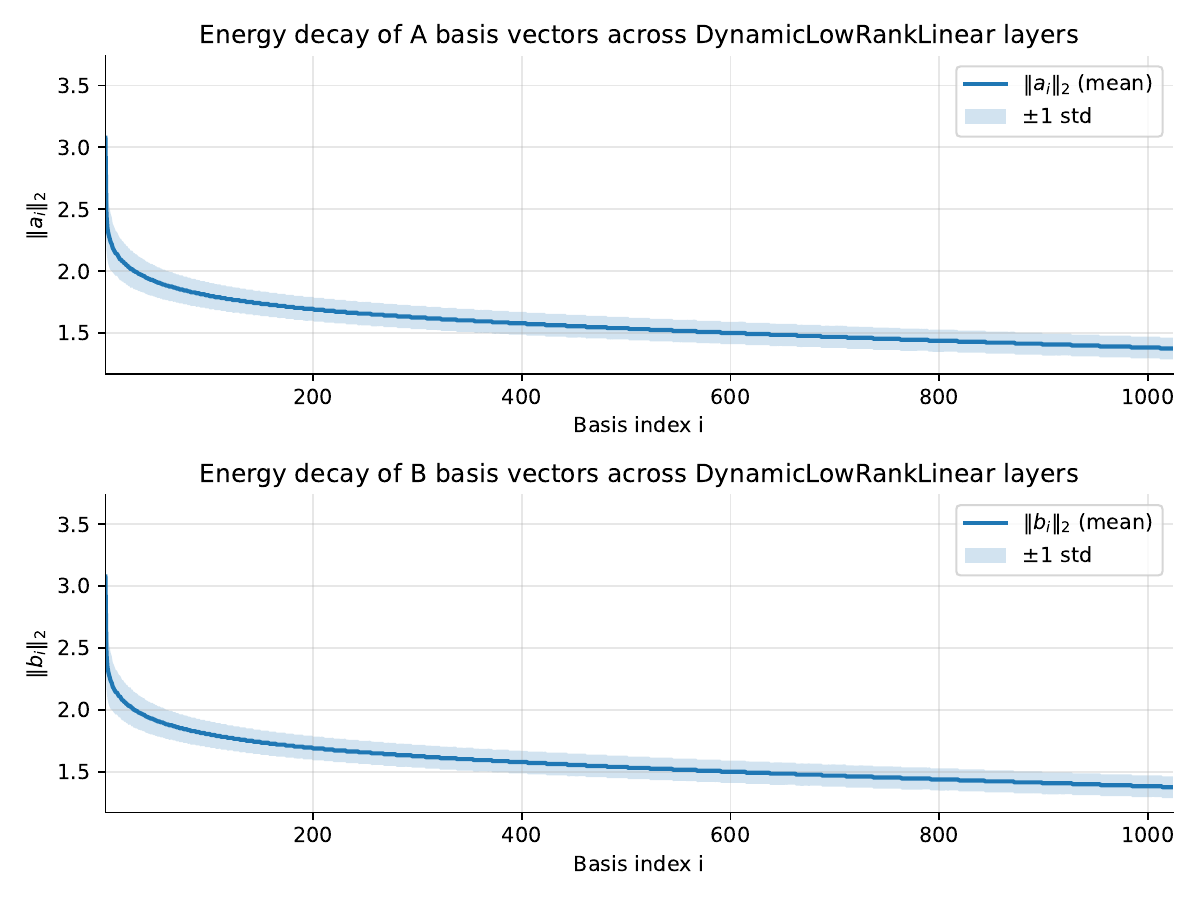}
    \caption{\textbf{Full-range energy decay.}  
    Decay profile across the entire rank spectrum, illustrating the global monotonic trend implied by Assumption~1.}
    \label{fig:energy-decay-full}
\end{subfigure}
\caption{\textbf{Empirical energy decay patterns across NSN layers.}  
Each plot aggregates the norms of rank-1 components $(\mathbf{a}_i,\mathbf{b}_i)$ across all DynamicLowRankLinear layers in the NSN-modified GPT-NeoX model.  
The short-range plot highlights early-index behavior, while the full-range plot captures the complete structural decay.  
Both views provide complementary evidence supporting the energy-ordering behaviour predicted by Assumption~1.}
\label{fig:energy-decay-side-by-side}
\end{figure}

\begin{table}[t]
\centering
\tiny
\setlength{\tabcolsep}{4pt}
\begin{tabular}{l c c c c c c c c}
\toprule
Layer & R & A-mon. & A-viol. & B-mon. & B-viol. & A$_{\text{first}}$ & A$_{\text{last}}$ & B$_{\text{first}}$ \\
\midrule
L0 h→4h & 1024 & F & 1 & F & 2 & 3.141 & 1.406 & 3.141 \\
L0 4h→h & 1024 & F & 2 & T & 0 & 2.578 & 1.188 & 2.562 \\
L1 h→4h & 1024 & T & 0 & T & 0 & 3.922 & 1.297 & 3.922 \\
L1 4h→h & 1024 & T & 0 & T & 0 & 2.797 & 1.195 & 2.781 \\
L2 h→4h & 1024 & T & 0 & F & 1 & 3.828 & 1.336 & 3.828 \\
L2 4h→h & 1024 & T & 0 & T & 0 & 2.969 & 1.258 & 2.953 \\
L3 h→4h & 1024 & F & 1 & F & 2 & 3.562 & 1.375 & 3.547 \\
L3 4h→h & 1024 & T & 0 & T & 0 & 2.578 & 1.234 & 2.578 \\
L4 h→4h & 1024 & T & 0 & T & 0 & 3.797 & 1.375 & 3.797 \\
L4 4h→h & 1024 & T & 0 & F & 1 & 2.938 & 1.227 & 2.938 \\
L5 h→4h & 1024 & F & 1 & T & 0 & 3.562 & 1.375 & 3.562 \\
L5 4h→h & 1024 & T & 0 & F & 1 & 2.609 & 1.266 & 2.609 \\
L6 h→4h & 1024 & F & 2 & F & 1 & 3.625 & 1.352 & 3.625 \\
L6 4h→h & 1024 & T & 0 & T & 0 & 2.531 & 1.281 & 2.531 \\
L7 h→4h & 1024 & T & 0 & F & 1 & 3.672 & 1.344 & 3.656 \\
L7 4h→h & 1024 & T & 0 & T & 0 & 2.641 & 1.281 & 2.641 \\
L8 h→4h & 1024 & F & 1 & T & 0 & 3.641 & 1.344 & 3.641 \\
L8 4h→h & 1024 & T & 0 & T & 0 & 2.531 & 1.281 & 2.531 \\
L9 h→4h & 1024 & F & 1 & F & 3 & 3.625 & 1.336 & 3.625 \\
L9 4h→h & 1024 & F & 1 & T & 0 & 2.484 & 1.273 & 2.484 \\
L10 h→4h & 1024 & F & 1 & F & 4 & 3.562 & 1.336 & 3.562 \\
L10 4h→h & 1024 & F & 1 & T & 0 & 2.312 & 1.273 & 2.312 \\
L11 h→4h & 1024 & F & 1 & T & 0 & 3.516 & 1.336 & 3.516 \\
L11 4h→h & 1024 & T & 0 & T & 0 & 2.312 & 1.281 & 2.312 \\
L12 h→4h & 1024 & T & 0 & T & 0 & 3.625 & 1.336 & 3.625 \\
L12 4h→h & 1024 & T & 0 & F & 2 & 2.344 & 1.289 & 2.344 \\
L13 h→4h & 1024 & T & 0 & T & 0 & 3.641 & 1.344 & 3.641 \\
L13 4h→h & 1024 & T & 0 & F & 1 & 2.266 & 1.312 & 2.266 \\
L14 h→4h & 1024 & T & 0 & T & 0 & 3.656 & 1.344 & 3.656 \\
L14 4h→h & 1024 & T & 0 & T & 0 & 2.266 & 1.336 & 2.266 \\
L15 h→4h & 1024 & T & 0 & T & 0 & 3.656 & 1.344 & 3.656 \\
L15 4h→h & 1024 & F & 1 & T & 0 & 2.422 & 1.344 & 2.422 \\
L16 h→4h & 1024 & T & 0 & T & 0 & 3.672 & 1.352 & 3.672 \\
L16 4h→h & 1024 & T & 0 & F & 1 & 2.344 & 1.359 & 2.344 \\
L17 h→4h & 1024 & T & 0 & F & 1 & 3.688 & 1.352 & 3.688 \\
L17 4h→h & 1024 & T & 0 & F & 2 & 2.375 & 1.391 & 2.375 \\
L18 h→4h & 1024 & F & 1 & T & 0 & 3.641 & 1.359 & 3.641 \\
L18 4h→h & 1024 & F & 1 & T & 0 & 2.453 & 1.414 & 2.453 \\
L19 h→4h & 1024 & T & 0 & T & 0 & 3.594 & 1.367 & 3.594 \\
L19 4h→h & 1024 & T & 0 & T & 0 & 2.609 & 1.422 & 2.625 \\
L20 h→4h & 1024 & T & 0 & T & 0 & 3.516 & 1.383 & 3.516 \\
L20 4h→h & 1024 & F & 1 & F & 1 & 2.828 & 1.438 & 2.828 \\
L21 h→4h & 1024 & T & 0 & F & 1 & 3.438 & 1.391 & 3.438 \\
L21 4h→h & 1024 & T & 0 & F & 1 & 2.984 & 1.453 & 2.984 \\
L22 h→4h & 1024 & F & 1 & T & 0 & 3.422 & 1.406 & 3.422 \\
L22 4h→h & 1024 & T & 0 & F & 1 & 3.047 & 1.477 & 3.047 \\
L23 h→4h & 1024 & F & 1 & T & 0 & 3.375 & 1.422 & 3.375 \\
L23 4h→h & 1024 & T & 0 & F & 1 & 2.734 & 1.500 & 2.750 \\
L24 h→4h & 1024 & T & 0 & T & 0 & 3.344 & 1.430 & 3.344 \\
L24 4h→h & 1024 & T & 0 & F & 2 & 2.484 & 1.516 & 2.484 \\
L25 h→4h & 1024 & T & 0 & F & 1 & 3.344 & 1.438 & 3.344 \\
L25 4h→h & 1024 & T & 0 & T & 0 & 2.328 & 1.523 & 2.328 \\
L26 h→4h & 1024 & F & 1 & T & 0 & 3.312 & 1.438 & 3.312 \\
L26 4h→h & 1024 & T & 0 & F & 1 & 2.234 & 1.531 & 2.234 \\
L27 h→4h & 1024 & T & 0 & T & 0 & 3.281 & 1.438 & 3.281 \\
L28 h→4h & 1024 & T & 0 & T & 0 & 3.281 & 1.438 & 3.266 \\
L28 4h→h & 1024 & F & 1 & T & 0 & 2.391 & 1.555 & 2.391 \\
L29 h→4h & 1024 & F & 1 & T & 0 & 3.266 & 1.438 & 3.266 \\
L29 4h→h & 1024 & T & 0 & T & 0 & 2.625 & 1.555 & 2.625 \\
L30 h→4h & 1024 & T & 0 & T & 0 & 3.297 & 1.438 & 3.297 \\
L30 4h→h & 1024 & T & 0 & T & 0 & 3.688 & 1.531 & 3.688 \\
L31 h→4h & 1024 & T & 0 & T & 0 & 3.562 & 1.422 & 3.562 \\
L31 4h→h & 1024 & T & 0 & T & 0 & 3.688 & 1.445 & 3.688 \\
\bottomrule
\end{tabular}
\caption{\textbf{Empirical evaluation of Assumption~1 across all DynamicLowRankLinear layers.}  
For each layer, we compute the norms of the rank-1 components $(\mathbf{a}_i, \mathbf{b}_i)$. Columns indicate: maximum rank $R$, monotonicity flags for $A$ and $B$ (A-mon., B-mon.), the number of violations (A-viol., B-viol.), and the norms of the first and last components, which capture the magnitude of decay.  
This table quantifies how consistently the trained NSN architecture orders its basis directions by importance.}
\label{table:energy_decay}
\end{table}
\newpage

\subsection{Energy Profiles in Standard Dense Models}
\label{subsec:dense_energy_profiles}

To complement the analysis in Appendix~\ref{subsec:energy_decay_appendix}, we perform the same diagnostic procedure on a standard, unmodified GPT-NeoX model whose MLP blocks use conventional dense linear transformations. This comparison isolates whether the energy decay structure observed in NSNs also appears in ordinary architectures, or whether it is instead a property induced by the NSN reparameterization and training objective.

\textbf{Setup}.  
For each dense MLP layer, we take the weight matrix $W \in \mathbb{R}^{d_{\text{out}} \times d_{\text{in}}}$ and examine its rows and columns directly, without any factorization.  
We define
\[
\mathbf{a}_i = W_{i,:}, \qquad 
\mathbf{b}_i = W_{:,i},
\]
and compute the Euclidean norms $\|\mathbf{a}_i\|_2$ and $\|\mathbf{b}_i\|_2$ across all row and column indices.  
This is the direct analogue of the NSN analysis: if dense models naturally encode more important directions earlier in their parameterization, we would observe structured energy decay across indices.  
We evaluate monotonicity, quantify violations, and compute layer-averaged energy profiles exactly as in the NSN case. The resulting aggregated profiles are visualized in Figures~\ref{fig:dense-energy-short} and~\ref{fig:dense-energy-full}.

\textbf{Findings}.  
Across all layers, the energy profiles are essentially flat: both row norms and column norms remain nearly constant as a function of index.  
Unlike the NSN-adapted model, where low-index basis vectors consistently exhibit higher energy and a clear decay pattern, the dense model displays no meaningful ordering.  
Monotonicity is neither present nor expected; the norms fluctuate minimally and show no global trend.  
This indicates that in standard dense architectures, parameter indices do not correspond to any notion of directional importance, and no analogue of Assumption~1 emerges from training alone.

\textbf{Takeaway}.  
The absence of any structured energy decay in dense models highlights a key distinction between NSNs and conventional architectures.  
Whereas NSNs learn a highly organized hierarchy of basis directions—with most of the functional energy concentrated in early rank components—dense models distribute energy uniformly with no discernible ordering.  
This comparison reinforces that the nested subspace structure arises from the NSN parameterization and training procedure rather than from generic properties of large neural networks.  
Figures~\ref{fig:dense-energy-short} and~\ref{fig:dense-energy-full} make this contrast explicit: NSNs exhibit sharp, consistent energy decay, whereas dense models show no relationship between index and energy at all.

\begin{figure}[t]
\centering
\begin{subfigure}[t]{0.48\textwidth}
    \centering
    \includegraphics[width=\linewidth]{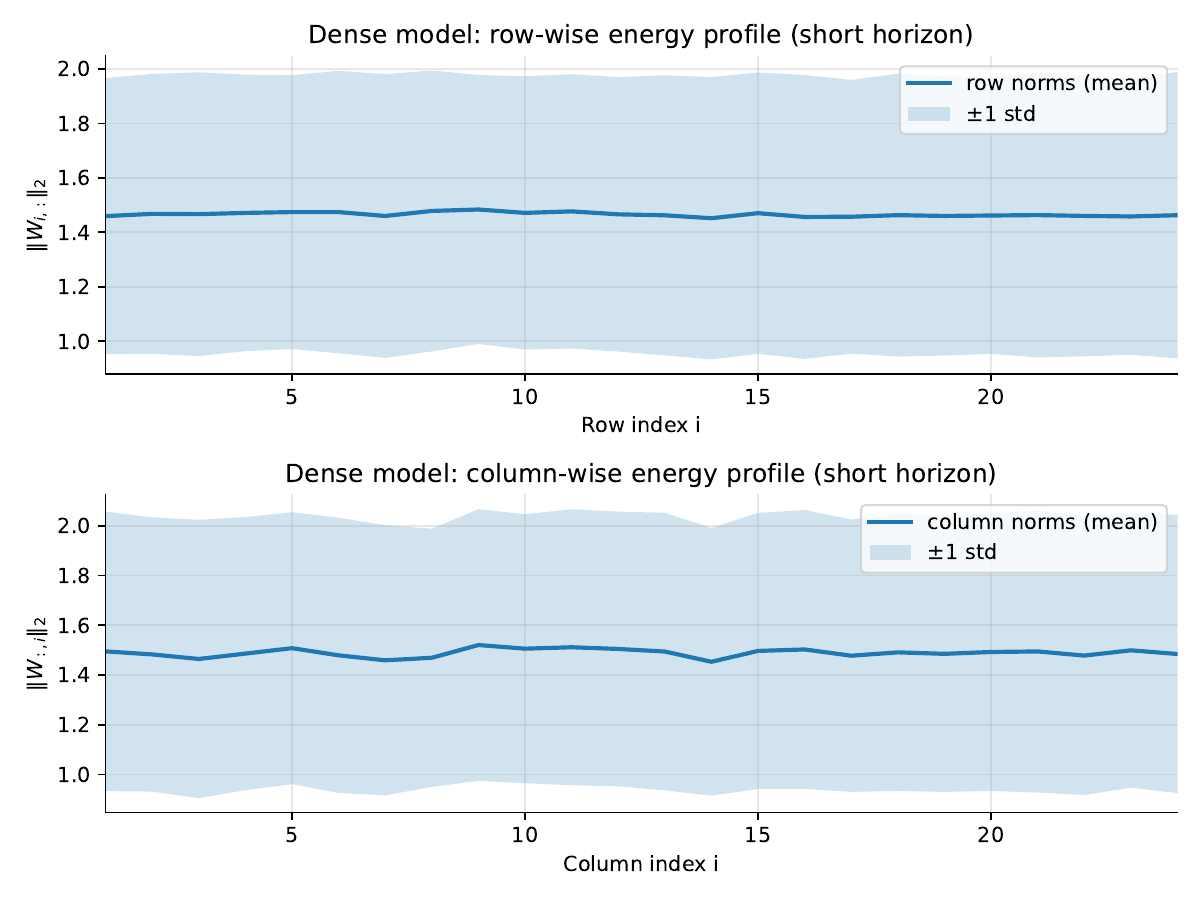}
    \caption{\textbf{Short-horizon energy profile in a dense model.}  
    Mean and standard deviation of row and column norms over the first 24 indices for the dense GPT-NeoX MLP layers.  
    The profiles are essentially flat, indicating no systematic dependence of weight energy on index; all weights have comparable energy.}
    \label{fig:dense-energy-short}
\end{subfigure}
\hfill
\begin{subfigure}[t]{0.48\textwidth}
    \centering
    \includegraphics[width=\linewidth]{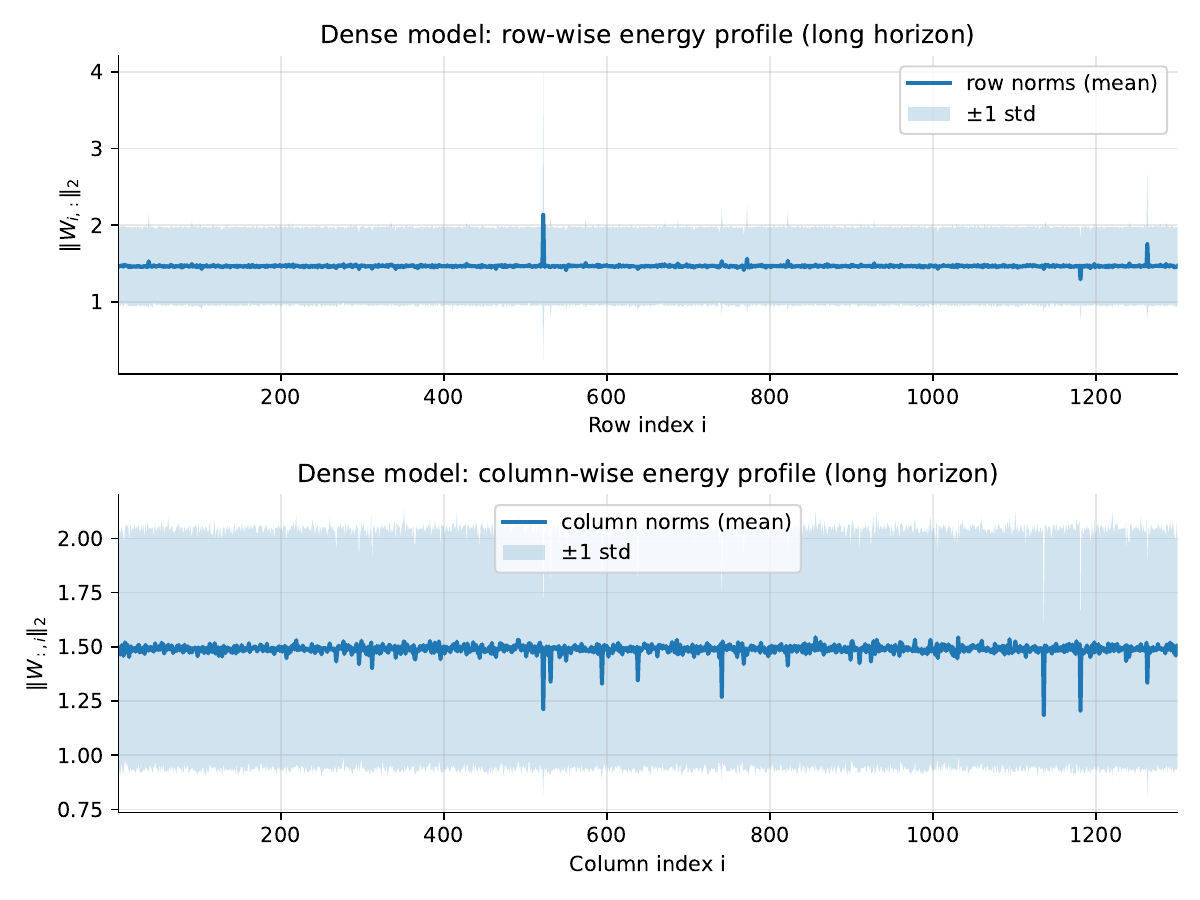}
    \caption{\textbf{Full-range energy profile in a dense model.}  
    Row- and column-wise norms across the entire index range show no clear trend or decay, consistent with an absence of any ordered structure in the weight energies.  
    All indices exhibit similar magnitude, confirming that there is effectively no relationship between index and energy.}
    \label{fig:dense-energy-full}
\end{subfigure}
\caption{\textbf{Lack of ordered energy decay in standard dense GPT-NeoX MLP layers.}  
Unlike the NSN-adapted model, where rank-1 components exhibit a clear energy decay with basis index, the dense model’s row and column norms are nearly constant across indices.  
This indicates that the dense parameterization does not naturally impose an ordering of directions by energy: all weights have effectively the same energy, and there is no meaningful relationship between index and importance.}
\label{fig:dense-energy-decay-side-by-side}
\end{figure}

\paragraph{Violation Analysis Setup.}
To quantify how strongly the dense model violates the energy–ordering property, we compute violation rates in direct analogy to the NSN analysis.  
For each MLP layer, we examine the sequences of row norms and column norms,
\[
\mathbf{a}_i = W_{i,:}, \qquad 
\mathbf{b}_i = W_{:,i},
\]
and record how often adjacent pairs fail the monotonic condition $\|\mathbf{a}_i\|_2 \ge \|\mathbf{a}_{i+1}\|_2$ or $\|\mathbf{b}_i\|_2 \ge \|\mathbf{b}_{i+1}\|_2$.  
Each adjacent index yields a binary event (violation or no violation), allowing us to compute per-layer and per-model violation rates.  
For a fair comparison to NSNs, we aggregate all adjacent comparisons across all layers and report both an overall violation rate and a depth-resolved profile with binomial standard errors.

\begin{figure}[t]
\centering
\includegraphics[width=0.55\linewidth]{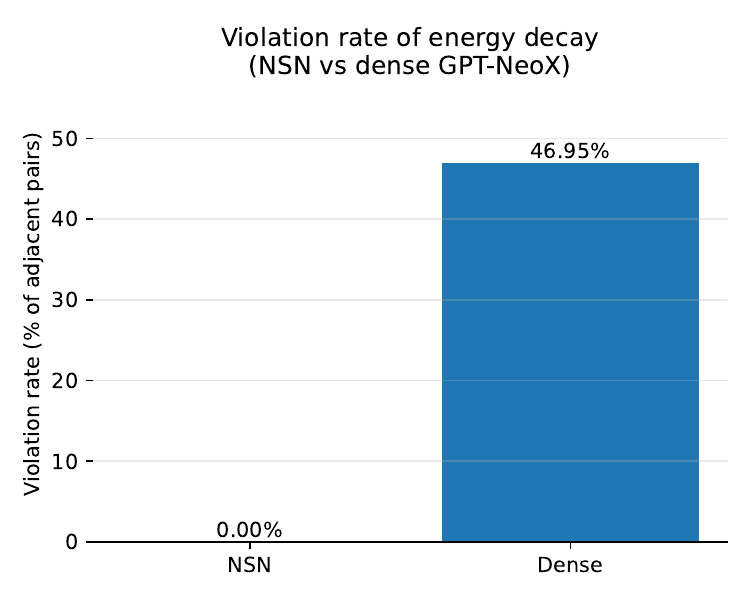}
\caption{\textbf{Aggregate violation rate of the energy–decay assumption in NSN and dense models.}
Bars show the percentage of adjacent index pairs that violate the monotonic decay condition, aggregated across all MLP layers.
The NSN model exhibits extremely low violation rates (0.00\% in this specific run), whereas the dense model shows violation rates that are similar to random orderings (which would be about 50\%).}
\label{fig:violation-rate-global}
\end{figure}

\begin{figure}[t]
\centering
\includegraphics[width=0.75\linewidth]{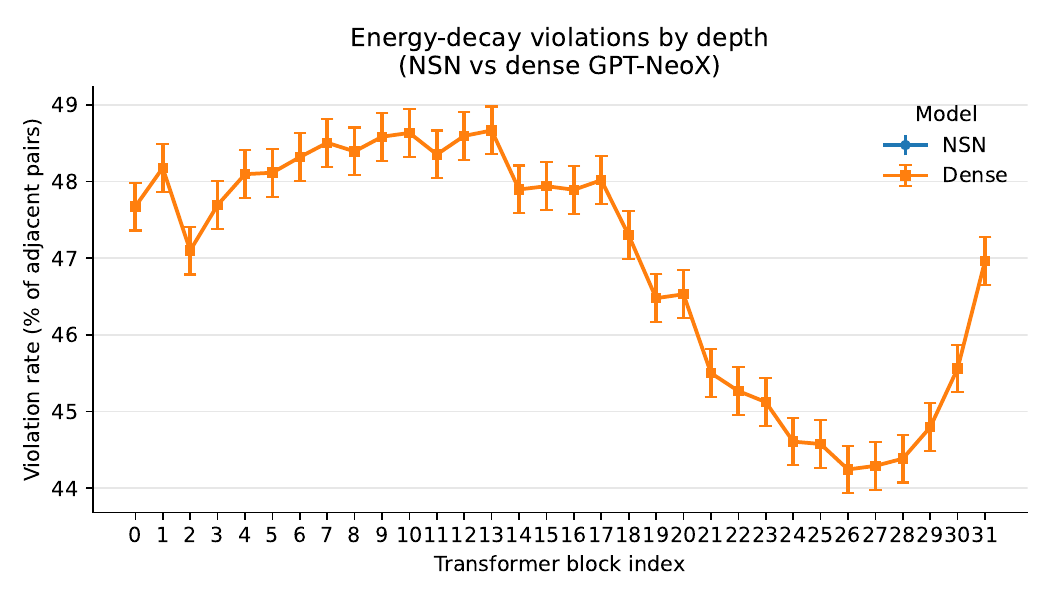}
\caption{\textbf{Violation rate by transformer block.}
Violation rates are shown per transformer block for both NSN (circles) and dense (squares) models, with binomial standard errors.  The NSN error rates are not visible because they are negligible (below 0.01\% on the graph). The dense model seems to have a consistent pattern how often the energy decay assumption is violated, yet this violation is consistently extremely high. This violation makes sense given that nearby transformer blocks should be correlated. The key takeaway is that regular training schemes do not induce a sufficient ordering of basis vectors.}
\label{fig:violation-rate-by-block}
\end{figure}

\paragraph{Takeaway.}
The violation statistics provide a complementary perspective to the energy profiles reported earlier.  
Across all layers and transformer blocks, NSNs demonstrate strikingly consistent adherence to the energy–decay structure, with only a tiny fraction of adjacent pairs (typically well below~1\%) violating monotonicity.  
Dense models, by contrast, display no such structure: violation rates are an order of magnitude larger and show no systematic dependence on depth.  
Together, these results reinforce that the nested subspace ordering is not an incidental artifact of large neural networks but a direct consequence of the NSN parameterization and training objective, which actively induce a stable and ordered hierarchy of basis directions.

\newpage
\subsection{Computational Efficiency through Surgical Replacement}

\begin{wrapfigure}[25]{r}{0.5\columnwidth}
    \vspace*{-4.5mm}
    \centering
  \includegraphics[width=1\linewidth]{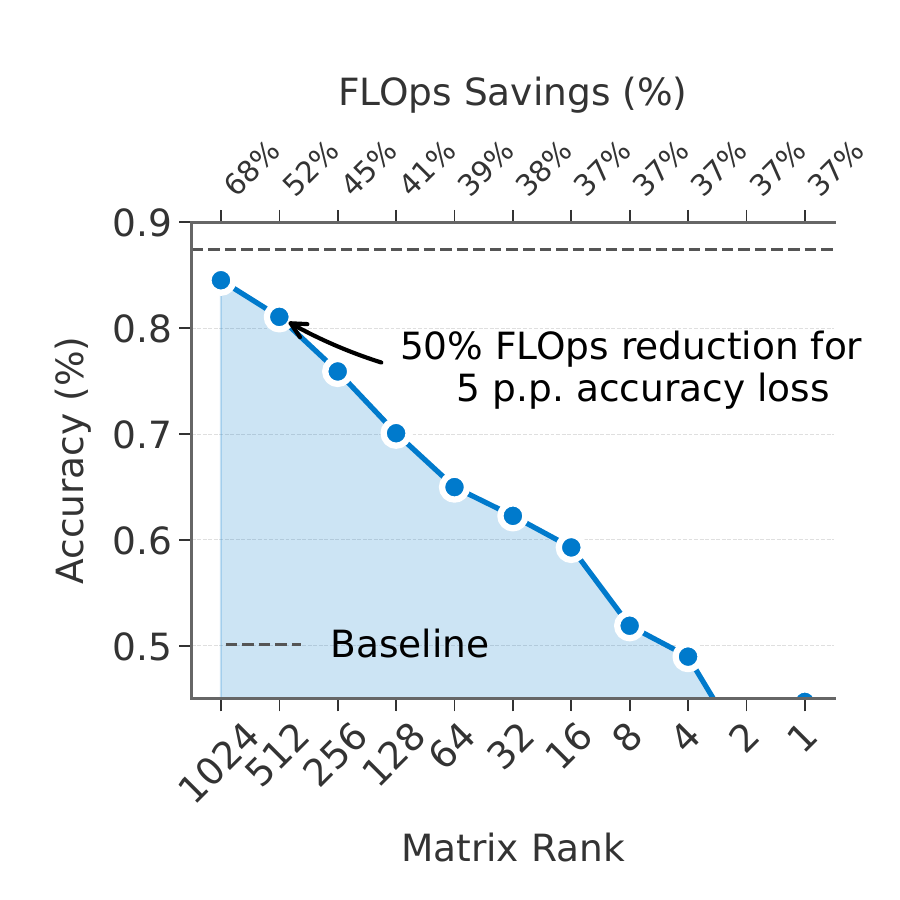}
    \caption{\textbf{Example Surgical Changes to linear layers only on Gemma-2B}. The architecture of Gemma-2B. All MLP blocks contain three linear layers with a GeLU activation function. We surgically replace all linear layers with \textit{rank-adaptive} linear layers, initialized $W \approx BA$ via SVD-decomposition}
    \label{fig:flops_savings2}
    \vspace{-2mm}
    \rule{\linewidth}{.5pt}
\end{wrapfigure}

\paragraph{Objective.}
This analysis demonstrates the practical computational benefits of surgically replacing standard linear layers with rank-adaptive linear layers in existing transformer architectures. The goal is to quantify the reduction in floating-point operations (FLOPs) achieved through low-rank decomposition while maintaining the nested subspace structure.

\paragraph{Methodology.}
We performed surgical modifications to the Gemma-2B architecture by replacing all linear layers within the MLP blocks with rank-adaptive variants. Each original weight matrix $W$ was decomposed using Singular Value Decomposition (SVD) to initialize the factorized form $W \approx BA$, where $B$ and $A$ are lower-rank matrices. The MLP blocks, which contain three linear layers with GeLU activation functions, were systematically converted to support multiple rank configurations while preserving the original model's functionality.

\paragraph{Results and Interpretation.}
The surgical replacement approach enables significant computational savings through reduced matrix operations. By decomposing the original full-rank weight matrices into their low-rank approximations, the number of parameters and corresponding FLOPs are substantially reduced. This modification allows for dynamic rank selection during inference, providing a trade-off between computational efficiency and model capacity without requiring complete retraining of the base model.

\newpage
\section{Additional theoretical insights on Nested Subspace Networks}

\subsection{Condition for nested subspace property}
\label{subsec:nested_property}

\begin{lemma}[Sufficient condition for nested subspace property]
If $\text{rank}(A_{r}) = r$ for all $r \le R$, then $\text{Im}(W_r) = \text{span}\{b_1,\dots,b_r\}$ and therefore
$\text{Im}(W_r) \subseteq \text{Im}(W_{r+1})$.
\end{lemma}

The SVD initialization as well as our rank training objective make rank deficiency extremely rare in practice.

\subsection{Simple example of an NSN layer}
\label{subsec:toy_example}

\paragraph{Example 1 (Toy NSN layer).}
To make this construction concrete, consider an NSN layer with $d_{\text{in}} = d_{\text{out}} = 2$
and maximum rank $R = 2$.
We choose a single pair of factor matrices
\[
    A =
    \begin{bmatrix}
        1 & 0 \\
        0 & 1
    \end{bmatrix},
    \qquad
    B =
    \begin{bmatrix}
        1 & 0 \\
        0 & 1
    \end{bmatrix}.
\]
For rank $r = 1$, we use only the first row of $A$ and the first column of $B$:
\[
    A_1 = \begin{bmatrix} 1 & 0 \end{bmatrix}, \qquad
    B_1 =
    \begin{bmatrix}
        1 \\
        0
    \end{bmatrix},
    \qquad
    W_1 = B_1 A_1 =
    \begin{bmatrix}
        1 & 0 \\
        0 & 0
    \end{bmatrix}.
\]
For rank $r = 2$, we use all rows/columns:
\[
    A_2 = A, \qquad B_2 = B, \qquad
    W_2 = B_2 A_2 =
    \begin{bmatrix}
        1 & 0 \\
        0 & 1
    \end{bmatrix}.
\]
Thus, the rank-$1$ and rank-$2$ effective weights $W_1$ and $W_2$ are both derived from the same
factor matrices $(A,B)$. Adjusting the rank simply changes how many basis vectors are active.

\subsection{Why use the uncertainty-aware objective?}
\label{subsec:why_uncertainty}
NSNs train a hierarchy of rank-truncated submodels inside one set of weights by factorizing each linear layer $W = BA$ and enforcing a nested-subspace structure across ranks. We view training across ranks as a multi-task learning problem and propose to use a Kendall-style objective for this. Empirically, the learned log-variances decrease with rank which we interpret as a proxy for rank expresiveness. In our ablations, we show that two cross-entropies deliver the required gains across ranks and adding additional regularization is not productive.

\textbf{What properties do we look for?} 
We seek a weighting mechanism that (i) automatically adapts the relative importance of ranks without per-rank hyperparameter tuning, (ii) is invariant to arbitrary rescalings of the factorization $W = BA$, (iii) guarantees positive weights, and (iv) is cheap enough to apply inside every NSN layer and on every training step. The uncertainty-aware objective surrogate satisfies these requirements: the reparameterization $\exp(-s_k)$ yields strictly positive, smoothly varying weights with a strictly convex dependence on $s_k$, so optimization is stable, and because it operates on loss values rather than gradient norms it is insensitive to the scale ambiguity between $A$ and $B$. 

\textbf{What are the benefits of using an uncertainty-aware objective?} 
The uncertainty-aware objective directly addresses the heterogeneous difficulty of learning different ranks: lower ranks exhibit larger and noisier cross-entropy losses, which would otherwise dominate or destabilize the optimization if all ranks were weighted equally. Introducing rank-specific log-variances $s_k$ yields effective weights $\exp(-s_k)$ that adaptively attenuate gradients from high-uncertainty ranks while the additive $s_k$ term prevents trivial suppression of a task, so the hierarchy is trained jointly but stably in a single objective. This surrogate empirically leads to well-behaved performance at low ranks and at interpolated ranks that were never explicitly optimized. In addition, the learned $s_k$ form an interpretable diagnostic: higher ranks consistently converge to lower log-variances than lower ranks (Fig.~3), providing a quantitative measure of rank expressiveness rather than treating the rank index as a purely architectural hyperparameter.

\textbf{Would we use a different mechanism for weighting the ranks?} 
In principle, any multi-task reweighting scheme could be applied across ranks, even methods such as GradNorm \citep{chen2018gradnorm}, but these alternatives come with trade-offs that are poorly matched to the NSN parameterization. For instance, gradient-norm–based methods require choosing a reference layer and repeatedly computing per-rank norms, and their behavior is sensitive to the arbitrary scaling between $A$ and $B$ in $W = BA$, so the induced weights can drift for reasons unrelated to rank difficulty. Moreover, aggressively equalizing training rates across ranks can over-emphasize very low ranks early in training and degrade the anchor model, whereas our anchor–variant design intentionally biases optimization toward a strong high-rank solution while still improving smaller ranks. For these reasons we view the uncertainty-aware objective as the most practical default for NSNs, and leave more elaborate, possibly model-specific weighting schemes as future work rather than as necessary components of the method.

\subsection{Proof of proposition}
\label{sec:proof_proposition}

We seek to demonstrate that the performance of the network at an untrained, interpolated rank $r_{\text{int}}$ remains close to the performance at an explicitly trained rank. This property relies on the structure induced by the training process. Let the shared, learned weight matrices be $A \in \mathbb{R}^{R \times d_{\text{in}}}$ and $B \in \mathbb{R}^{d_{\text{out}} \times R}$, where $R$ is the maximum rank. Let $\vec{a}_i \in \mathbb{R}^{1 \times d_{\text{in}}}$ be the $i$-th row vector of $A$ and $\vec{b}_i \in \mathbb{R}^{d_{\text{out}} \times 1}$ be the $i$-th column vector of $B$.

\begin{lemma}[Adjacent Rank Perturbation]
\label{lemma:adj_pert}
Let $f(\vec{x}; r) = (\sum_{i=1}^{r} \vec{b}_i \vec{a}_i)\vec{x}$ be the output of the linear layer for an input $\vec{x}$ at rank $r$. The perturbation to the output when moving from rank $r$ to $r+1$ is bounded by:
$$ \norm{f(\vec{x}; r+1) - f(\vec{x}; r)} \le \norm{\vec{b}_{r+1}}\norm{\vec{a}_{r+1}}\norm{\vec{x}} $$
\end{lemma}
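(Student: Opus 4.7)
The plan is to observe that moving from rank $r$ to rank $r+1$ adds exactly one rank-1 term to the sum, and then to bound the norm of that single rank-1 contribution using the homogeneity of the norm together with Cauchy--Schwarz.

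First I would write out both $f(\vec{x}; r+1)$ and $f(\vec{x}; r)$ via Definition~1 and subtract. Since the two partial sums agree on their first $r$ summands, everything telescopes and we obtain the clean identity
$$ f(\vec{x}; r+1) - f(\vec{x}; r) = \vec{b}_{r+1}\vec{a}_{r+1}\vec{x}. $$
This step is pure bookkeeping and carries no real content beyond reading off the definition.

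Next I would exploit the row/column orientations fixed in Definition~1: $\vec{a}_{r+1} \in \mathbb{R}^{1 \times d_{\text{in}}}$ is a row and $\vec{x} \in \mathbb{R}^{d_{\text{in}}}$ is a column, so $\vec{a}_{r+1}\vec{x}$ is a scalar. Pulling it outside the vector norm by homogeneity gives
$$ \norm{\vec{b}_{r+1}\vec{a}_{r+1}\vec{x}} = |\vec{a}_{r+1}\vec{x}|\,\norm{\vec{b}_{r+1}}. $$
A single application of Cauchy--Schwarz to the scalar factor yields $|\vec{a}_{r+1}\vec{x}| \le \norm{\vec{a}_{r+1}}\norm{\vec{x}}$, and multiplying the two bounds delivers exactly the stated inequality.

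There is essentially no main obstacle here: the lemma is a one-line telescoping identity followed by Cauchy--Schwarz. The only thing worth being careful about is keeping the row/column conventions of Definition~1 straight, so that $\vec{b}_{r+1}\vec{a}_{r+1}$ is genuinely a rank-1 outer product (an $d_{\text{out}} \times d_{\text{in}}$ matrix) rather than an inner product. No appeal to Assumption~1 or to any training-induced structure is needed; the bound is a purely algebraic fact that holds for \emph{any} factor matrices $A, B$ of the stated shapes and for any input $\vec{x}$.
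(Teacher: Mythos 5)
Your proof is correct and follows essentially the same route as the paper's: both identify $f(\vec{x};r+1)-f(\vec{x};r)$ as the action of the single rank-1 increment $\vec{b}_{r+1}\vec{a}_{r+1}$ on $\vec{x}$ and then bound its norm. The only cosmetic difference is that the paper applies the submultiplicative property of matrix and vector norms twice, whereas you factor out the scalar $\vec{a}_{r+1}\vec{x}$ and apply Cauchy--Schwarz; the two arguments yield the identical bound.
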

\begin{proof}
The change in the weight matrix is $W_{r+1} - W_r = \vec{b}_{r+1}\vec{a}_{r+1}$. The change in the output is thus $(W_{r+1} - W_r)\vec{x}$. Applying the submultiplicative property of matrix and vector norms yields the result: $\norm{(\vec{b}_{r+1}\vec{a}_{r+1})\vec{x}} \le \norm{\vec{b}_{r+1}\vec{a}_{r+1}}\norm{\vec{x}} \le \norm{\vec{b}_{r+1}}\norm{\vec{a}_{r+1}}\norm{\vec{x}}$.
\end{proof}

\begin{proof}[Proof of Proposition~\ref{prop:interp_err}]

The total change in the function output between rank $r_1$ and $r_{\text{int}}$ can be expressed as a telescoping sum. By the triangle inequality and Lemma~\ref{lemma:adj_pert}:
\begin{align*}
\norm{f(\vec{x}; r_{\text{int}}) - f(\vec{x}; r_1)} &= \norm{ \sum_{i=r_1+1}^{r_{\text{int}}} (f(\vec{x}; i) - f(\vec{x}; i-1)) } \\
&\le \sum_{i=r_1+1}^{r_{\text{int}}} \norm{f(\vec{x}; i) - f(\vec{x}; i-1)} \\
&\le \left( \sum_{i=r_1+1}^{r_{\text{int}}} \norm{\vec{b}_i}\norm{\vec{a}_i} \right) \norm{\vec{x}}
\end{align*}
Due to the Lipschitz continuity of the loss $\mathcal{L}$, the difference in expected error is bounded:
$$ |E(r_{\text{int}}) - E(r_1)| \le L_{\mathcal{L}} \cdot \mathbb{E}\left[\norm{f(\vec{x}; r_{\text{int}}) - f(\vec{x}; r_1)}\right] $$
Substituting the bound on the function perturbation and defining $C = L_{\mathcal{L}} \cdot \mathbb{E}[\norm{\vec{x}}]$ yields the final result.
\end{proof}

\subsection{SVD Initialization}
\label{appendix:svd}

We propose an initialization strategy based on Singular Value Decomposition that preserves the original model parameterization at the outset of training. Formally, for a given pre-trained weight matrix $W$, we compute its SVD, $W = U\Sigma V^T$, and initialize the factor matrices $B \in \mathbb{R}^{d_{out} \times \tilde{R}}$ and $A \in \mathbb{R}^{\tilde{R} \times d_{in}}$ using the top $\tilde{R}$ singular components: $B_{\text{init}} := U_{\tilde{R}} \sqrt{\Sigma_{\tilde{R}}}$ and $A_{\text{init}} := \sqrt{\Sigma_{\tilde{R}}} V_{\tilde{R}}^T$, where $U_{\tilde{R}}$ and $V_{\tilde{R}}$ contain the first $\tilde{R}$ columns of $U$ and $V$, and $\Sigma_{\tilde{R}}$ is the diagonal matrix of the top $\tilde{R}$ singular values. This scheme ensures that at the maximum rank $\tilde{R}$, the NSN layer's effective weight matrix, $W_{\tilde{R}} = B_{\text{init}}A_{\text{init}}$,  reconstructs the original pre-trained matrix $W$ either exactly (if $ \tilde{R} = R$) or with the smallest Frobenius norm (if $\tilde{R} < R$). 

\subsection{Training cost of NSN}

Compared to a standard dense network with weight matrix \(W \in \mathbb{R}^{d_{\text{out}} \times d_{\text{in}}}\), whose per-step training cost is proportional to \(d_{\text{in}} d_{\text{out}}\) FLOPs, training a Nested Subspace Network (NSN) layer with maximum training rank \(\tilde R\) and a sampled variant rank \(r < \tilde R\) requires per-step FLOPs proportional to \((\tilde R + r)(d_{\text{in}} + d_{\text{out}})\), because each optimization step performs a forward–backward pass at the anchor rank \(\tilde R\) and another at the variant rank \(r\). Using the break-even rank \(R_{\text{be}} = \frac{d_{\text{in}} d_{\text{out}}}{d_{\text{in}} + d_{\text{out}}}\), for which a single low-rank pass matches the dense cost, the pessimistic case \(r \approx \tilde R \approx R_{\text{be}}\) gives a total of about \(2 d_{\text{in}} d_{\text{out}}\) FLOPs per step, i.e., at most roughly twice the cost of training one dense model with the same input and output dimensions. However, this single NSN training run yields a whole hierarchy of usable ranks at test time, so if \(K\) different computational budgets are needed, the NSN still replaces \(K\) separate dense training runs (total cost \(\approx K d_{\text{in}} d_{\text{out}}\)) with one run whose cost is only a constant factor above that of a single dense model, effectively amortizing the training cost over many operating points.

\subsection{On the derived functional form of the loss in Equation}
\label{subsec:functional_form}

Why did we arrive at the specific functional form in Equation \ref{eq:eq_loss} and, concretely, why are we using the exponential as the coefficient?

We start with our goal: We want a positive weight for each rank--specific loss that adapts during training but does not collapse to zero or infinity.  
Let \(L_k\) denote the task loss at rank \(k\) and define a positive weight \(w_k>0\).  We can write out two equivalent parametrizations which are useful in our context:

\paragraph{1) Direct optimization over positive weights with a log--barrier}
\[
\min_{w_k>0} \sum_k \left[\, w_k L_k \;-\; \log w_k \,\right].
\]
The term \(-\log w_k\) prevents \(w_k\to 0\) and yields a unique closed--form optimum in \(w_k\) for fixed model parameters. We can reparametrize this equation to yield an equivalent re-parametrization found in the main body of the paper.

\paragraph{2) Reparameterizing \(w_k = e^{-s_k}\) with \(s_k\in\mathbb{R}\)}
\[
\sum_k \left[\, e^{-s_k} L_k + s_k \,\right].
\]
This matches Eq. \ref{eq:eq_loss} (up to a constant), with \(s_k = \log\sigma_k^2\).

Why this particular form of the optimization? are a few different ways to think about it.

\paragraph{(a) Positivity and simple optimization}
The mapping \(w_k = e^{-s_k}\) guarantees \(w_k>0\) for all \(s_k\).  Furthermore, the objective is convex in \(s_k\) for fixed \(L_k\):
\[
\frac{\partial^2}{\partial s_k^2}\bigl(e^{-s_k}L_k + s_k\bigr) = e^{-s_k} L_k > 0.
\]
If $L_k >0$, then $\frac{\partial^2}{\partial s_k^2}\bigl(e^{-s_k}L_k + s_k\bigr) > 0$ for all $s$. This means the function is stricly convex in $s$. If $L_k = 0$, then the loss is $s_k$ which is still convex. This convexity is a useful property for gradient updates and helps to learn the different contributions effectively, as empirically shown in Fig. \ref{fig:task_uncertainty}.

\paragraph{(b) Closed--form optimal weights and scale invariance} This parametrization allows to obtain easy closed-form weights. For fixed model parameters,
\[
\frac{\partial}{\partial s_k}\left(e^{-s_k}L_k + s_k\right)
  = -e^{-s_k}L_k + 1.
\]
Setting this to zero gives
\[
e^{-s_k^\star} = \frac{1}{L_k}, \qquad w_k^\star = \frac{1}{L_k}.
\]

This gives us two useful properties:

\begin{itemize}
\item \textbf{Loss--scale invariance:}  
If \(L_k \leftarrow c L_k\), then \(s_k^\star \leftarrow s_k^\star + \log c\) while  
\(w_k L_k = 1\) remains unchanged.

\item \textbf{Coarse gradient balancing:}  
At the optimum, the contribution of rank \(k\) is
\[
w_k \nabla_\theta L_k = \frac{1}{L_k}\nabla_\theta L_k,
\]
which prevents dominance by a loss with artificially large scale. This is a particularly useful property since we \textit{expect} models with lower ranks to have higher loss due to their (definitionally) lower expressivity.  Recall that Eq. \ref{eq:eq_loss} scales gradients by \(\nabla_\theta L_{\text{total}}
  = \sum_k e^{-s_k}\,\nabla_\theta L_k\). Therefore, jointly learning \(s_k\) allows the optimizer to attenuate gradients from noisier ranks (\(s_k\) large) and amplify gradients from cleaner ranks (\(s_k\) small), while the term \(+s_k\) prevents collapse \(e^{-s_k}\to 0\).
\end{itemize}

\paragraph{(c) Link to heteroskedastic uncertainty} We can think of this loss as being directly tied to heteroskedsatic uncertainty in the regression case. Concretely, for Gaussian regression noise, the negative log--likelihood is
\[
\frac{1}{2\sigma^2}\|{\rm residual}\|^2 + \frac{1}{2}\log\sigma^2.
\]
Setting \(s_k = \log\sigma^2\) gives the structure \(e^{-s_k}(\cdot) + s_k\).  
Classification lacks a Gaussian residual. However, it is common in practice to use this as a surrogate objective. Equation \ref{eq:eq_loss} acts as a surrogate for such a Gaussian residual in the classification setting.

\subsection{Why log-variances are emergent proxies for expresiveness of each rank}
\textbf{Are the log-variances free parameters?} The log-variances are not free parameters, but they are trainable parameters. They are not free because in the objective
\[
e^{-s_k}\mathcal{L}_{\text{CE}}(k) + s_k,
\]
each log-variance \(s_k\) is coupled to the rank-\(k\) loss. This coupling means their values depend directly on the loss within each model of a given rank. However, we still learn these values during training.

\textbf{Why does log-variance serve as an emergent proxy?} \textbf{Short answer:} This parameter tracks the residual loss for each rank. Higher residual loss (higher error for a given rank) leads to a higher learned uncertainty parameter. Thus, it emerges as a proxy for expressiveness: higher residual loss indicates a less expressive model, and the parameter tracks this loss.

Each rank-model contributes differently to the training objective. For a rank \(k\) model, the contribution is
\[
\mathcal{L}_k = e^{-s_k} L_{\text{CE}}(k) + s_k,
\]
where \(s_k\) is the log-variance and \(L_{\text{CE}}(k)\) is the cross-entropy loss.

After training, at a stationary point where the gradient is zero, we have
\[
\frac{\partial \mathcal{L}_k}{\partial s_k}
= - e^{-s_k} L_{\text{CE}}(k) + 1
= 0,
\]
which implies
\[
e^{-s_k} L_{\text{CE}}(k) = 1,
\]
and therefore
\[
s_k = \log L_{\text{CE}}(k).
\]

Thus, up to optimization noise and interactions with other parameters, the learned log-variances track the scale of the residual loss at that rank.

\textbf{More expressive vs.\ less expressive models.}  
A more expressive model can reduce \(L_{\text{CE}}(k)\) further, which forces \(s_k\) to be smaller. A less expressive model is stuck with a higher \(L_{\text{CE}}(k)\) and therefore learns a higher \(s_k\). Over training, this creates a relationship in which ranks that explain the data well end up with lower log-variances, while ranks that explain it poorly end up with higher log-variances.

\newpage
\section{Pseudocode}

\begin{algorithm}[t]
\caption{Forward pass of a Nested Subspace Network (NSN) layer}
\label{alg:nsn-layer-forward}
\begin{algorithmic}[1]
\Require Input vector batch $X \in \mathbb{R}^{B \times d_{\text{in}}}$
\Require Factor matrices $A \in \mathbb{R}^{R \times d_{\text{in}}}$, $B \in \mathbb{R}^{d_{\text{out}} \times R}$, bias $b \in \mathbb{R}^{d_{\text{out}}}$
\Require Active rank $r \in \{1,\dots,R\}$
\Ensure Output logits $Y \in \mathbb{R}^{B \times d_{\text{out}}}$

\State $A_r \gets$ first $r$ rows of $A$ \Comment{$A_r \in \mathbb{R}^{r \times d_{\text{in}}}$}
\State $B_r \gets$ first $r$ columns of $B$ \Comment{$B_r \in \mathbb{R}^{d_{\text{out}} \times r}$}
\State $H \gets X A_r^\top$ \Comment{Project inputs to rank-$r$ subspace, $H \in \mathbb{R}^{B \times r}$}
\State $Y \gets H B_r^\top + \mathbf{1} b^\top$ \Comment{Map back to output space}
\State \Return $Y$
\end{algorithmic}
\end{algorithm}

\begin{algorithm}[t]
\caption{Forward pass of a Nested Subspace Network}
\label{alg:nsn-network-forward}
\begin{algorithmic}[1]
\Require Input batch $X$
\Require NSN layers $\{\text{Layer}_\ell\}_{\ell=1}^L$, each with $(A_\ell, B_\ell, b_\ell)$ and shared max rank $R$
\Require Active rank $r \in \{1,\dots,R\}$
\Ensure Output logits $Z$

\State $H \gets X$
\For{$\ell = 1$ to $L$}
    \State $H \gets \text{NSNLayerForward}(H, A_\ell, B_\ell, b_\ell, r)$ \Comment{Alg.~\ref{alg:nsn-layer-forward}}
    \If{$\ell < L$}
        \State $H \gets \phi(H)$ \Comment{Apply nonlinearity, e.g.\ ReLU or GELU}
    \EndIf
\EndFor
\State $Z \gets H$ \Comment{Final logits}
\State \Return $Z$
\end{algorithmic}
\end{algorithm}

\begin{algorithm}[t]
\caption{Multi-rank uncertainty-weighted training for Nested Subspace Networks (anchor at maximal rank)}
\label{alg:nsn-training}
\begin{algorithmic}[1]
\Require Training dataset $\mathcal{D} = \{(x_i, y_i)\}$
\Require Maximal rank $R$ and set of trainable ranks $\mathcal{K} \subseteq \{1,\dots,R\}$
\Require NSN model with parameters $\theta = \{A_\ell, B_\ell, b_\ell\}_{\ell=1}^L$
\Require Rank-specific log-variances $\{s_k\}_{k \in \mathcal{K}}$ with $s_k = \log(\sigma_k^2)$
\Require Optimizer $\mathrm{Opt}$
\Ensure Trained NSN parameters $\theta$ and log-variances $\{s_k\}$

\State Initialize $\theta$ and set $s_k \gets 0$ for all $k \in \mathcal{K}$

\For{each training step}
    \State Sample a minibatch $(X, Y) \sim \mathcal{D}$

    \State $\tilde{R} \gets R$ \Comment{Anchor rank is always the maximal rank}
    \State $\mathcal{K}_{\mathrm{var}} \gets \{k \in \mathcal{K} : k < \tilde{R}\}$
    \State $r \gets \mathrm{UniformSample}(\mathcal{K}_{\mathrm{var}})$ \Comment{Variant rank is sampled from lower trainable ranks}

    \State Set model rank $r_{\mathrm{active}} \gets \tilde{R}$
    \State $Z_{\tilde{R}} \gets \mathrm{NSNForward}(X, r_{\mathrm{active}})$
    \State $\mathcal{L}_{\mathrm{CE}}(\tilde{R}) \gets \mathrm{CrossEntropy}(Z_{\tilde{R}}, Y)$

    \State Set model rank $r_{\mathrm{active}} \gets r$
    \State $Z_{r} \gets \mathrm{NSNForward}(X, r_{\mathrm{active}})$
    \State $\mathcal{L}_{\mathrm{CE}}(r) \gets \mathrm{CrossEntropy}(Z_{r}, Y)$

    \State $s_{\tilde{R}} \gets$ log-variance associated with rank $\tilde{R}$
    \State $s_{r} \gets$ log-variance associated with rank $r$
    \State $\mathcal{L}_{\mathrm{anchor}} \gets \exp(-s_{\tilde{R}})\,\mathcal{L}_{\mathrm{CE}}(\tilde{R}) + s_{\tilde{R}}$
    \State $\mathcal{L}_{\mathrm{variant}} \gets \exp(-s_{r})\,\mathcal{L}_{\mathrm{CE}}(r) + s_{r}$
    \State $\mathcal{L}_{\mathrm{total}} \gets \mathcal{L}_{\mathrm{anchor}} + \mathcal{L}_{\mathrm{variant}}$

    \State $\mathrm{Opt}.\mathrm{zero\_grad}()$
    \State Backpropagate gradients of $\mathcal{L}_{\mathrm{total}}$ with respect to $\theta$ and $\{s_k\}$
    \State $\mathrm{Opt}.\mathrm{step}()$
\EndFor
\end{algorithmic}
\end{algorithm}

\newpage
\section{On LLM Usage}
The authors have used large language models for three purposes:

\begin{itemize}
    \item We have used LLMs to aid or polish our writing. This includes rephrasing text, shorterning, proof-reading for ambuigities or finding mistakes or inconsistencies in notation
    \item We used LLMs as a supplementary source of finding related work. While we have primarily performed related work searches via google scholar, we have used the "Deep Research" functionality to find other related work that we might have missed. This has resulted in us adding response-based KD and self-distillation as related work to the paper.
    \item We have used LLMs for research ideation early on in the paper. This included brainstorming ways how to make efficient deep neural networks, what are the properties that such neural networks should have, among others.
\end{itemize}

Otherwise, all the ideas presented in the paper are our own. We take full responsibility for any errors found in the paper.

--

\end{document}